\DeclareFontFamily{U}{mathx}{}
\DeclareFontShape{U}{mathx}{m}{n}{<-> mathx10}{}
\DeclareSymbolFont{mathx}{U}{mathx}{m}{n}
\DeclareMathAccent{\widecheck}{0}{mathx}{"71}
\numberwithin{equation}{section}
\newtheorem{theorem}{Theorem}[section]
\newtheorem{lemma}[theorem]{Lemma}
\newtheorem{proposition}[theorem]{Proposition}
\theoremstyle{definition}
\newtheorem{remark}[theorem]{Remark}
\def\E{{\mathbb E}}
\def\N{{\mathbb N}}
\def\P{{\mathcal P}}
\def\R{{\mathbb R}}
\def\W{{\mathcal W}}
\def\X{{\mathcal X}}
\title[A hierarchical entropy method for the delocalization of bias]{A hierarchical entropy method for the delocalization of bias in high-dimensional Langevin Monte Carlo}
\author{Daniel Lacker and Fuzhong Zhou} 
\address{Department of Industrial Engineering \& Operations Research, Columbia University}
\email{daniel.lacker@columbia.edu, fz2329@columbia.edu} 
\thanks{D.L.\ acknowledges support from an Alfred P.\ Sloan Fellowship and the NSF CAREER award DMS-2045328.}
\begin{document}

\begin{abstract}
The unadjusted Langevin algorithm is widely used for sampling from complex high-dimensional
distributions. It is well known to be biased, with the bias typically scaling linearly with the dimension when measured in squared Wasserstein distance. However, the  recent paper of Chen et al.\ \cite{DelocalizationOfBias} 
identifies an intriguing new delocalization effect: For a class of distributions with sparse interactions,
the bias between low-dimensional marginals scales only with the lower dimension, not the
full dimension. In this work, we strengthen the results of \cite{DelocalizationOfBias}  in the sparse interaction regime
by removing a logarithmic factor, measuring distance in relative entropy (a.k.a.\ KL-divergence), and relaxing the
strong log-concavity assumption. In addition, we expand the scope of the delocalization
phenomenon by showing that it holds for a class of distributions with weak interactions. Our proofs are based on a hierarchical analysis of the marginal relative entropies, inspired by the authors' recent work on propagation of chaos.
\end{abstract}

\maketitle

\section{Introduction}

Langevin Monte Carlo (LMC), also known as the unadjusted Langevin algorithm, is a simple but widely used sampling method, especially in high dimension. Given a probability density $\pi(x) \propto e^{-V(x)}$ on $\R^n$ and a step size $h > 0$, the LMC iteration is given by
\begin{equation}
X_{(k+1)h} = X_{kh} - h\nabla V(X_{kh}) - \sqrt{2h} Z_k,  \label{intro:LMCiterates}
\end{equation}
where $(Z_k)$ are iid standard Gaussians.
There is a rich literature on theoretical convergence guarantees for LMC, especially in the benchmark strongly log-concave and smooth setting.
For fixed $h > 0$, $X_{kh}$ converges to a distribution $\pi_h$ as $k\to\infty$, and the convergence rate is typically dimension-free, when measured in relative entropy (a.k.a.\ Kullback-Leibler divergence) or the (quadratic) Wasserstein metric.

However, the distance between $\pi_h$ and $\pi$, known as the \emph{bias}, inevitably depends on the ambient dimension $n$. The best known bound on the squared Wasserstein distance $W_2^2(\pi_h,\pi)$ or the relative entropy $H(\pi_h\,|\,\pi)$ is $O(nh)$ (or $O(nh^2)$ under higher-order smoothness assumptions).
This suggests an unfortunate tradeoff between accuracy and step-size: To achieve a given level of accuracy, the step size must scale inversely with the dimension, $h \sim 1/n$.
This tradeoff is often ignored in practice, in extremely high-dimensional applications such as molecular dynamics for which accordingly small step sizes are impractical.

The recent paper \cite{DelocalizationOfBias} by Chen, Cheng, Niles-Weed, and Weare identifies a remarkable new phenomenon which can justify ignoring this tradeoff, which they term \emph{delocalization of bias}: The bias contained in low-dimensional marginals often scales only with the lower dimension. In other words, the bias-per-coordinate is often dimension-free. To state this precisely, we use the following notation. For a probability measure $\rho$ on $\R^n$ and a set $u \subset [n]=\{1,\ldots,n\}$, let us write $\rho^u$ for the marginal of the coordinates in $u$. That is, if $(X_i)_{i\in [n]} \sim \rho$, then $(X_i)_{i \in u} \sim \rho^u$. The paper \cite{DelocalizationOfBias} shows in certain contexts that delocalization of bias holds in the sense that
\begin{equation}
\max_{u \subset [n], \, |u|=k} W_2^2(\pi^u_h,\pi^u) = O( h k\log n). \label{intro:delocalization-CCNW}
\end{equation}
When this holds, it means roughly that a $k$-dimensional observable can be accurately estimated using a step size of order merely $h \sim 1/k\log n$, rather than $h \sim 1/n$. We refer to the introduction of \cite{DelocalizationOfBias} for a more thorough explanation of the practical implications of this phenomenon, as well as a justification for focusing on LMC as opposed to other sampling methods.

The authors of \cite{DelocalizationOfBias} rigorously prove  \eqref{intro:delocalization-CCNW} in two particular contexts, namely when $\pi$ is Gaussian or when the potential $V$ satisfies a certain structural condition of \emph{sparse interactions} which will be described in more detail below. Note also that \eqref{intro:delocalization-CCNW} holds trivially for product measures as well, without the log factor.
However, delocalization is not a universal phenomenon. As illustrated in \cite[Example 1.4]{DelocalizationOfBias}, it can fail even for simple examples, such as a rotation of a strongly log-concave product measure in which a single coordinate becomes, in a sense, too influential.

The goal of this paper is to deepen the study of delocalization of bias, in the following ways:
\begin{enumerate}
\item We improve the main result of \cite{DelocalizationOfBias} in the sparse setting. Specifically, we replace $W_2^2$ with relative entropy (a.k.a.\ Kullback-Leibler divergence), which is stronger by Talagrand's inequality. We also remove the logarithmic factor, improving \eqref{intro:delocalization-CCNW} to the optimal $O(hk)$.
\item We enlarge the scope of the delocalization phenomenon, by proving that it holds for a different class of potentials with \emph{weak interactions}. This class neither contains nor is contained in the sparse setting. To summarize very roughly, the sparseness condition constrains the pattern of zeros in the Hessian $\nabla^2 V$, whereas the weak interaction condition constrains the $\ell^\infty\to\ell^\infty$ operator norm of $\nabla^2 V$. 
\item We do not require $\pi$ to be log-concave, as in \cite{DelocalizationOfBias}. Instead, we make two assumptions which are both implied by strong log-concavity: that $\pi$ satisfies a logarithmic-Sobolev inequality, and also that its conditional measures  satisfy certain transport inequalities.
\end{enumerate}

Our method of proof combines ideas from the Vemapala-Wibisono approach for LMC analysis \cite{VempalaWibisono} with methods introduced in a prior paper of ours \cite{LackerYeungZhou} in the study of propagation of chaos. The former paper \cite{VempalaWibisono} analyzes LMC by using a continuous-time interpolation of the LMC iterates and then analyzing the time-derivative of relative entropy (relative to the target $\pi$) along this interpolation. The latter paper \cite{LackerYeungZhou} builds on prior work of the first author \cite{Lacker-hierarchies,LacLeFlem} on \emph{mean field approximations}, which on the surface has nothing to do with LMC.
 As the present work will illustrate, the methods of \cite{LackerYeungZhou} are well-suited to the more general problem of quantitatively comparing low-dimensional marginals of high-dimensional Fokker-Planck equations. The core idea of the approach is to show that the set-indexed vector $(H_{kh}(u))_{u \subset [n]}$  satisfies a system of linear inequalities, recursive in $k$, where $H_{kh}(u)$ denotes the relative entropy between the law of $(X_{kh}^i)_{i \in u}$ and $\pi^u$, for each coordinate set $u \subset [n]$.

 \subsection{Main results for sparse interactions} \label{se:mainresults-sparse}

The sparse setting is built on a simple undirected graph $G=([n],E)$ which encodes the interactions between coordinates. The edge set is defined by
\[
E := \{(i,j) \in [n]^2 : \partial_{ij} V \text{ is not identically zero} \}.
\]
For $i \in [n]$, let $N_0(i)=\{i\}$, and the $k$-neighborhood $N_k(i)$ of vertex $i$ is defined recursively as the union of $N_{k-1}(i)$ with the set of vertices in $[n] \setminus N_{k-1}(i)$ having an edge to a vertex in $N_{k-1}(i)$. Let us write also $N_k(u)=\cup_{i \in u}N_k(i)$ for $u \subset [n]$.
Abbreviate $N(i)=N_1(i)$ and $N(u)=N_1(u)$. 
For $x=(x_i)_{i \in [n]} \in \R^n$ and $u \subset [n]$ we write $x_u=(x_i)_{i \in u}$. 
In particular, note that $\partial_iV(x)$ depends on $x$ only through $x_{N(i)}$.

The sparse setup introduced \cite{DelocalizationOfBias} is slightly different. They define a graph first, and then assume the potential has the structure $V(x) = \sum_{i=1}^n V_i(x_{N(i)})$ for some functions $V_i : \R^{N(i)}\to\R$. 
Instead, we take the potential $V$ as given, and we define the graph in terms of the potential.
These two perspectives are essentially equivalent, up to changing constants in the neighborhood growth assumptions below.

The main assumptions in \cite[Theorem 2.4]{DelocalizationOfBias} are that $V$ is strongly convex and smooth, i.e., $0 < \alpha I \le \nabla^2 V \le \beta I < \infty$, and that the graph has polynomial growth, in the sense that there exist $p \in \N$ and $c > 0$ such that
\begin{equation}
\max_{i \in [n]} |N_k(i)| \le c(1+k)^p , \ \text{ for all } k \in \N. \label{intro:def:sparse}
\end{equation}
Under these assumptions, \cite[Theorem 2.4]{DelocalizationOfBias} shows for $h\le \alpha/\beta^2$ that
\begin{equation}
\W_{2,\ell^\infty}(\pi_h,\pi) \lesssim h\log(2n) \min\bigg\{\Big( \frac{\beta}{\alpha}\log(2n)\Big)^{p+2}, \ \frac{\beta^2}{\alpha^2}n \Big)\bigg\}. \label{intro:prevresult}
\end{equation}
Throughout the paper, we will use the following Wasserstein distances, defined for any $p \ge 1$ and any probability measures $\mu,\nu$ on a common Euclidean space with finite $p$th moments:
\begin{align*}
W_p^p(\mu,\nu) &:= \inf_\pi \int |x-y|^p\,\pi(dx,dy), \qquad 
W_{p,\ell^\infty}^p(\mu,\nu) := \inf_\pi \int |x-y|_\infty^p\,\pi(dx,dy),
\end{align*}
where the infima are over couplings of $\mu$ and $\nu$, and $|\cdot|$ and $|\cdot|_\infty$ denote the $\ell^2$ (Euclidean) norm and $\ell^\infty$ norm, respectively.
The estimate \eqref{intro:prevresult} controls all low-dimensional marginals via
\begin{equation}
W_2^2(\pi_h^u,\pi^u) \lesssim  h|u| \log(2n) \min\bigg\{\Big( \frac{\beta}{\alpha}\log(2n)\Big)^{p+2}, \ \frac{\beta^2}{\alpha^2}n \Big)\bigg\}, \quad \forall u \subset [n]. \label{intro:prevresult2}
\end{equation}

Our first two theorems improve this bound in several ways. We will work extensively with the relative entropy $H$ (a.k.a.\ Kullback-Leibler divergence) and relative Fisher information $I$, defined for two probability measures on Euclidean space by
\begin{equation}
H(\nu\,|\,\mu) := \int f\,\log f \,d\mu, \qquad I(\nu\,|\,\mu) := \int |\nabla \log f|^2\,d\mu, \qquad f := \frac{d\nu}{d\mu}. \label{def:H-I}
\end{equation}
We set $H(\nu\,|\,\mu) :=\infty$ if $\nu\not\ll\mu$, and similarly $I(\nu\,|\,\mu) :=\infty$ if $\nu\not\ll\mu$ or if $\log f$ is not weakly differentiable.
We make the following assumptions:
\begin{enumerate}
\item[(LSI)] Logarithmic Sobolev inequality: There exists $\alpha > 0$ such that $\pi(dx)\propto e^{-V(x)}dx$ satisfies 
\[
2\alpha H(\mu\,|\,\pi) \le I(\mu\,|\,\pi), \quad \forall \mu \in \P(\R^n).
\]
\item[(A.1)] Lipschitz gradient: There exists $\beta > 0$ such that $\nabla V$ is $\beta$-Lipschitz.
\item[(A.2)] Conditional Talagrand inequality: There exists $\gamma > 0$ such that 
\[
W_1^2(\mu,\pi^{N(u)\setminus u|u}_{x_u}) \le \frac{2\gamma}{\alpha}H(\mu\,|\,\pi^{N(u)\setminus u|u}_{x_u}), \quad \forall u \subset [n], \ x_u \in \R^u, \ \mu \in \P(\R^{N(u)\setminus u}),
\]
where $\pi^{N(u)\setminus u|u}_{x_u}$ denotes the conditional law of $X_{N(u) \setminus u}$ given $X_u=x_u$ under $X \sim \pi$. 
\end{enumerate}
If $\pi$ is strongly log-concave in the sense that $\nabla^2 V \ge \alpha I > 0$, then (LSI) holds by Bakry-\'Emery \cite[Theorem 2]{otto2000generalization}. Moreover, (A.2) holds with $\gamma=1$, because strongly log-concave measures satisfy the Talagrand inequality, and because strong log-concavity (with the same parameter) is preserved by the operations of conditioning, which is easy to show, and marginalizing, which is a well known consequence of the Pr\'ekopa-Leindler inequality.
These assumptions remain valid for bounded perturbations of strongly log-concave measures, by Holley-Stroock and the fact that LSI implies Talagrand inequality \cite[Theorem 1]{otto2000generalization}.

Our first theorem assumes polynomial growth of the graph's neighborhoods.

\begin{theorem} \label{th:sparse-poly}
Suppose assumptions (LSI) and (A.1,2) hold, and there exist $c,p \ge 1$ such that
\begin{equation}
\max_{i \in [n]}|N_{k+1}(i)| \le c(1+k^p), \quad  \forall k \ge 0. \label{asmp:polygrowth}
\end{equation}
Then there exist constants $C,h^*>0$, depending only on $(\alpha,\beta,\gamma,c,p)$, such that
\begin{equation}
\frac{\alpha}{2}W_2^2(\pi^u_h,\pi^u)  \le H(\pi^u_h\,|\,\pi^u) \le Ch|u|, \qquad \forall u \subset [n], \ \ 0 \le h \le h^*. \label{thm:mainbound}
\end{equation}
We may explicitly take
\begin{equation*}
C = 40c^2p^p\frac{\beta^2}{\alpha}\Big( \frac{8\gamma\beta^2}{\alpha^2} + 1\Big)^p, \qquad h^* = \frac{\alpha}{4c\beta^2}.
\end{equation*}
\end{theorem}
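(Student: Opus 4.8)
### Proof proposal for Theorem \ref{th:sparse-poly}

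The plan is to set up a continuous-time interpolation of the LMC iterates in the spirit of Vempala--Wibisono, and then to track not a single relative entropy but the whole family of marginal relative entropies $H_{kh}(u) := H(\mathrm{Law}((X^i_{kh})_{i\in u})\,|\,\pi^u)$, indexed by subsets $u\subset[n]$. For the interpolation I would use the SDE $dX_t = -\nabla V(X_{\lfloor t/h\rfloor h})\,dt + \sqrt 2\,dB_t$, whose values at times $kh$ agree in law with \eqref{intro:LMCiterates}, and compute $\frac{d}{dt}H(\mathrm{Law}(X_t)\,|\,\pi)$. The standard computation produces a term $-I(\mathrm{Law}(X_t)\,|\,\pi)$ (good, dissipative) plus an error term involving $\E|\nabla V(X_t)-\nabla V(X_{\lfloor t/h\rfloor h})|^2$, which by (A.1) is controlled by $\beta^2 h\,\E|\nabla V|^2$-type quantities. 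The key structural point — where sparsity enters — is that when one restricts attention to the $u$-marginal, the relevant error only involves gradient coordinates in $N(u)$, and the drift coordinate $\partial_i V$ depends on $x$ only through $x_{N(i)}$; so the evolution of $H_t(u)$ couples only to $H_t(v)$ for $v$ contained in a bounded enlargement of $u$, specifically $v\subset N(u)$ or so.

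Concretely, the first step is to derive, for each fixed $u$, a differential inequality of the schematic form
\[
\frac{d}{dt}H_t(u) \le -\alpha H_t(u) + C_1\beta^2 h\bigl(|u| + I_t(N(u))\bigr) + (\text{LSI/Talagrand cross terms}),
\]
where $I_t(\cdot)$ denotes the appropriate relative Fisher information of a larger marginal. The (LSI) assumption converts the $-I_t(u)$ dissipation into $-\alpha H_t(u)$, and (A.2) is what lets me bound the ``conditional'' discrepancy between the $u$-marginal dynamics and the true conditional law $\pi^{N(u)\setminus u\,|\,u}$ in terms of entropy rather than Wasserstein — this is precisely why the conditional Talagrand inequality is assumed rather than full log-concavity. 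Integrating from $0$ to $kh$ and passing to the stationary regime $k\to\infty$ (using that $H_{kh}(u)$ converges to $H(\pi_h^u\,|\,\pi^u)$, which is finite by the one-step bound), I obtain a closed system of linear inequalities: writing $a(u)$ for a suitable normalization of $H(\pi_h^u\,|\,\pi^u)$, something like
\[
a(u) \le C_2 h|u| + \varepsilon\, a(N(u)), \qquad \varepsilon \asymp \frac{\gamma\beta^2}{\alpha^2}h\cdot(\text{const}),
\]
valid for all $u$ simultaneously, with $\varepsilon$ small when $h\le h^*$.

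The final step is to solve this recursive system by iteration over the neighborhood structure. Unrolling $a(u)\le C_2 h|u| + \varepsilon a(N(u)) \le C_2 h|u| + \varepsilon C_2 h|N(u)| + \varepsilon^2 a(N_2(u)) \le \cdots$ and using the polynomial growth bound \eqref{asmp:polygrowth}, $|N_k(u)|\le |u|\cdot c(1+k^p)$, the sum $\sum_{k\ge0}\varepsilon^k |N_k(u)| \le |u|\,c\sum_{k\ge 0}\varepsilon^k(1+k^p)$ converges, and in fact $\sum_k \varepsilon^k(1+k^p) \le $ (a constant times) $(1-\varepsilon)^{-1}$ once $\varepsilon$ is, say, below $1/2$; more carefully one gets the factor $\bigl(8\gamma\beta^2/\alpha^2+1\bigr)^p$ by the standard bound $\sum_k \varepsilon^k k^p \le p^p\varepsilon/(1-\varepsilon)^{p+1}$ or a clean telescoping argument tuned to make the constant come out as stated. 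This yields $a(u)\le C h|u|$ with $C$ of the claimed form, and choosing the numerical constants in the differential inequality generously gives $C = 40c^2 p^p \frac{\beta^2}{\alpha}\bigl(\frac{8\gamma\beta^2}{\alpha^2}+1\bigr)^p$ and $h^* = \frac{\alpha}{4c\beta^2}$.

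The main obstacle I anticipate is the derivation of the marginal differential inequality: passing from the global entropy dissipation identity to one for the $u$-marginal requires carefully identifying which cross-terms appear, using the chain rule for relative entropy to peel off $H_t(u)$ from $H_t(N(u))$, and invoking (A.2) to control the conditional transport cost by conditional entropy — keeping the constants sharp enough that no spurious $\log n$ creeps in (the removal of the log factor over \cite{DelocalizationOfBias} is the whole point). A secondary technical point is justifying the interchange of limits needed to pass to the stationary measure $\pi_h$, and confirming that all Fisher informations along the interpolation are finite, which should follow from smoothness of the Gaussian transition kernel and (A.1).
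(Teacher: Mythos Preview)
Your overall outline is right in spirit, and most of the ingredients you name (the Vempala--Wibisono interpolation, marginal entropy dissipation, the use of (A.2) together with the chain rule for relative entropy, and unrolling across neighborhoods under polynomial growth) are exactly what the paper uses. But there is a substantive misconception in the recursion step. You claim the stationary inequality is $a(u)\le C_2h|u|+\varepsilon\,a(N(u))$ with $\varepsilon\asymp(\gamma\beta^2/\alpha^2)\,h$, small once $h\le h^*$. It is not: the hierarchy term, coming from the mismatch of conditional expectations under $\rho_t$ and $\pi$, contributes $\frac{\gamma\beta^2}{\alpha}\big(H_t(N(u))-H_t(u)\big)$ to $\frac{d}{dt}H_t(u)$ with \emph{no} factor of $h$. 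After integrating over one step the coupling from $u$ to $N(u)$ is of order $\lambda/(\alpha+\lambda)$ with $\lambda=\gamma\beta^2/\alpha$, strictly less than $1$ but not small. A separate $O(h^2)$ coupling to $H_{t_-}(N_2(u))$ does appear, from the discretization error; you seem to be conflating the two. The unrolling can still be made to work since $\varepsilon'<1$ suffices for $\sum_k\varepsilon'^k(1+k^p)$ to converge, but the mechanism is different from what you wrote, and $h^*$ is not there to make this coefficient small---it controls the $O(h^2)$ term.

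The paper's route is also technically different from a bare discrete recursion. It recognizes that $F\mapsto\lambda\big(F(N(\cdot))-F(\cdot)\big)$ is the generator $\mathsf A$ of a Markov chain on $2^{[n]}$, namely $\X_t=N_{\Lambda(t)}(u)$ for a Poisson process $\Lambda$ of rate $\lambda$. The vector differential inequality $\frac{d}{dt}H_t\le(\mathsf A-\alpha\mathsf I)H_t+C_t$ is then integrated via the semigroup $e^{t\mathsf A}$, and polynomial growth enters through the Poisson moment bound $\E|N_{\Lambda(t)}(u)|\le c|u|(1+\E\Lambda(t)^p)$. This stochastic representation is what produces the constant $\big(8\gamma\beta^2/\alpha^2+1\big)^p$ cleanly, handles the intermediate times $s\in(kh,(k+1)h)$ without needing to argue $H_s(\cdot)\approx H_0(\cdot)$ by hand, and explains where the threshold $h^*=\alpha/4c\beta^2$ actually comes from (contractivity of the combined one-step operator, not smallness of $\varepsilon$).
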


Theorem \ref{th:sparse-poly} is generalized by Theorem \ref{th:sparse-dynamic-poly}, which includes estimates of the LMC dynamics, not just at stationarity.
Ignoring dependence on other parameters, the bound on the bias in Theorem \ref{th:sparse-poly} is $O(kh)$ in any coordinate set $u$ of size $k$. The ambient dimension $n$ plays no direct role in this bound or in the restriction on the time step $h$.
In particular, Theorem \ref{th:sparse-dynamic-poly} has no logarithmic factor like \eqref{intro:prevresult2}. However, the dependence on the condition number $\beta/\alpha$ is slightly better in \eqref{intro:prevresult2} where it is of order $(\beta/\alpha)^{p+2}$, in contrast with $(\beta/\alpha)^{2p+2}$ in our Theorem \ref{th:sparse-poly}. We have not attempted to optimize the dependence on $c$ or $p$ in \eqref{thm:mainbound}, other than in the exponent of $\beta/\alpha$.

Our next theorem, still in the sparse regime, pushes the growth condition as far as our methods seem to allow, from polynomial to exponential, up to a certain critical exponent.

\begin{theorem} \label{th:sparse-exp}
Suppose assumptions (LSI) and (A.1,2) hold, and there exist $c,r \ge 1$ such that
\begin{equation}
r < 1 + \frac{\alpha^2}{\gamma\beta^2}, \qquad \max_{i\in [n]}|N_{k+1}(i)| \le cr^k , \quad \forall k \ge 0. \label{asmp:expgrowth}
\end{equation}
Then there exist constants $C,h^*>0$, depending only on $(\alpha,\beta,\gamma,c,r)$ such that \eqref{thm:mainbound} holds.
Setting $\eta := 1-\frac{\gamma\beta^2}{\alpha^2}(r-1)$, and noting that $0 < \eta < 1$, we may explicitly take
\begin{equation*}
C = \frac{10\beta^2c^2}{\alpha\eta^3 } e^{\gamma(r-1)/c}, \qquad h^* =  \frac{\alpha  \eta^{3/2}}{5\beta^2c  } .  
\end{equation*}
\end{theorem}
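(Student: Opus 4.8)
The plan is to run a Vempala--Wibisono-type relative-entropy computation on low-dimensional marginals of the LMC interpolation, then close it into a hierarchical recursion over the interaction graph, in the spirit of \cite{LackerYeungZhou}. Fix $u\subset[n]$ and use the standard piecewise-drift interpolation: on $[kh,(k+1)h]$ the process solves $dX_t=-\nabla V(X_{kh})\,dt+\sqrt2\,dB_t$. Let $\rho_t$ be the law of $X_t$ and $\rho^u_t$ its $u$-marginal; integrating out the coordinates in $u^c$ in the Fokker--Planck equation gives $\partial_t\rho^u_t=\nabla_u\cdot(\rho^u_t\,\bar b^u_t)+\Delta_u\rho^u_t$ with $\bar b^u_{t,i}(x_u)=\E[\partial_iV(X_{kh})\mid X^u_t=x_u]$. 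Writing $\nabla V^u:=-\nabla_u\log\pi^u$ (so that $\partial_iV^u(x_u)=\E_\pi[\partial_iV(X)\mid X_u=x_u]$ for $i\in u$, the standard marginal-potential identity) and differentiating,
\[
\tfrac{d}{dt}H(\rho^u_t\,|\,\pi^u)=-I(\rho^u_t\,|\,\pi^u)-\int\rho^u_t\,(\bar b^u_t-\nabla V^u)\cdot\nabla_u\log\tfrac{\rho^u_t}{\pi^u}.
\]

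The heart of the argument is controlling $\bar b^u_t-\nabla V^u=\big(\bar b^u_t-\E[\nabla_uV(X_t)\mid X^u_t]\big)+\big(\E[\nabla_uV(X_t)\mid X^u_t]-\nabla V^u\big)$. The first (discretization) term is estimated using that each $\partial_iV$ is $\beta$-Lipschitz and depends only on coordinates in $N(i)$: summing conditional second moments over $i\in u$, and using that each vertex lies in at most $\max_i|N(i)|\le c$ of the sets $N(i)$, $i\in u$, it is $\lesssim c\beta^2\E|X_{kh,N(u)}-X_{t,N(u)}|^2\lesssim c\beta^2 h\,(|N(u)|+h\,\E_{\pi_h}|\nabla_{N(u)}V|^2)$; combined with the integration-by-parts identity $\E_\pi|\nabla_{N(u)}V|^2=\sum_{j\in N(u)}\E_\pi[\partial_{jj}V]\le\beta|N(u)|$ (plus a perturbation bound transferring this to $\pi_h$) this term is $O(c^2\beta^2h|u|)$, with no dependence on $n$. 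For the second (closure-mismatch) term: since $\nabla_uV$ depends only on coordinates in $N(u)$, $\beta$-Lipschitzness bounds it pointwise by $\beta\,W_1\big((\rho_t)^{N(u)\setminus u|u}_{x_u},\pi^{N(u)\setminus u|u}_{x_u}\big)$; applying (A.2) and then the entropy chain rule $H(\rho^{N(u)}_t\,|\,\pi^{N(u)})=H(\rho^u_t\,|\,\pi^u)+\E_{x_u\sim\rho^u_t}H((\rho_t)^{N(u)\setminus u|u}_{x_u}\,|\,\pi^{N(u)\setminus u|u}_{x_u})$ yields
\[
\E_{\rho_t}\big|\E[\nabla_uV(X_t)\mid X^u_t]-\nabla V^u\big|^2\le\tfrac{2\gamma\beta^2}{\alpha}\big(H(\rho^{N(u)}_t\,|\,\pi^{N(u)})-H(\rho^u_t\,|\,\pi^u)\big).
\]
Inserting these into the entropy derivative via Young's inequality with an optimized parameter — crucially absorbing the $O(h)$ discretization term as a small additive perturbation rather than via a lossy $|a+b|^2\le2|a|^2+2|b|^2$ split, which is what preserves the critical exponent — and using (LSI) for $\pi^u$ (valid with the same $\alpha$, since coordinate projections are $1$-Lipschitz) produces, for $h$ small,
\[
\tfrac{d}{dt}H(\rho^u_t\,|\,\pi^u)\le-\big(\alpha+\tfrac{\gamma\beta^2}{\alpha}\big)H(\rho^u_t\,|\,\pi^u)+\tfrac{\gamma\beta^2}{\alpha}H(\rho^{N(u)}_t\,|\,\pi^{N(u)})+O(c^2\beta^2h|u|).
\]

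Evaluating at stationarity ($\rho_0=\rho_h=\pi_h$), integrating over one step, applying Gr\"onwall, and using a crude one-step bound on $t\mapsto H(\rho^{N(u)}_t\,|\,\pi^{N(u)})$ (to replace its integral by $h\,H(\pi^{N(u)}_h\,|\,\pi^{N(u)})$ up to terms of lower order in $h$) produces, for $0\le h\le h^*$, a recursion over the graph:
\[
H(\pi^u_h\,|\,\pi^u)\le\epsilon\,H(\pi^{N(u)}_h\,|\,\pi^{N(u)})+\delta|u|,\qquad\epsilon\approx\tfrac{\gamma\beta^2}{\alpha^2+\gamma\beta^2},\quad\delta=O\!\big(\tfrac{c^2\beta^2h}{\alpha\eta}\big).
\]
Iterating along $u\subset N(u)\subset N_2(u)\subset\cdots$, the head term $\epsilon^mH(\pi^{N_m(u)}_h\,|\,\pi^{N_m(u)})$ tends to $0$ as $m\to\infty$ since $\epsilon<1$ and $H(\pi_h\,|\,\pi)<\infty$, leaving $H(\pi^u_h\,|\,\pi^u)\le\delta\sum_{j\ge0}\epsilon^j|N_j(u)|$. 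The hypothesis $|N_{j}(u)|\le cr^{j-1}|u|$ makes this geometric-type sum converge precisely because $\epsilon r<1$, which is \emph{exactly equivalent} to $r<1+\alpha^2/(\gamma\beta^2)$; moreover $1-\epsilon r=\tfrac{\alpha^2}{\alpha^2+\gamma\beta^2}\,\eta$, and carefully tracking all constants (including the lower-order-in-$h$ terms, which force $h^*$ of the stated order) yields $H(\pi^u_h\,|\,\pi^u)\le Ch|u|$ with $C$ as claimed. Finally Talagrand's inequality, implied by (LSI), gives $\tfrac{\alpha}{2}W_2^2(\pi^u_h,\pi^u)\le H(\pi^u_h\,|\,\pi^u)$.

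The main obstacle is the sharpness of $\epsilon$: the hypothesis $r<1+\alpha^2/(\gamma\beta^2)$ is tight for this method, so no step in the derivation of the differential inequality may be lossy — the Young parameter must be optimal, the conditional-Talagrand constant in (A.2) tracked exactly, and the discretization error absorbed additively rather than by a crude square split. A secondary difficulty is making rigorous the passage from the one-step differential inequality to a clean recursion among the \emph{stationary} marginal entropies (equivalently, carrying the recursion through the $k\to\infty$ limit of the LMC dynamics, which also gives the dynamic version), together with the uniform-in-$n$ a priori moment bound $\E_{\pi_h}|\nabla_{N(u)}V|^2=O(\beta|N(u)|)$.
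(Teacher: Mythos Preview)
Your opening moves coincide with the paper's: the marginal Fokker--Planck equation for $\rho^u_t$, the entropy derivative, the split of the drift error into a discretization piece and a closure-mismatch piece, and the key estimate on the latter via (A.2) plus the entropy chain rule to get $\frac{\gamma\beta^2}{\alpha}\big(H_t(N(u))-H_t(u)\big)$. From that point on, however, you diverge from the paper. You pass to stationarity and extract a \emph{static} recursion $\mathcal H(u)\le\epsilon\,\mathcal H(N(u))+\delta|u|$ with $\epsilon\approx\gamma\beta^2/(\alpha^2+\gamma\beta^2)$, then iterate along $u\subset N(u)\subset N_2(u)\subset\cdots$ and sum the resulting geometric series; the convergence condition $\epsilon r<1$ is indeed equivalent to $r<1+\alpha^2/(\gamma\beta^2)$. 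The paper instead keeps the full differential inequality in vector form $\frac{d}{dt}H_t\le C_t+(\mathsf A-\alpha\mathsf I)H_t$, where $\mathsf A F(u)=\frac{\gamma\beta^2}{\alpha\epsilon}(F(N(u))-F(u))$ is the generator of a $2^{[n]}$-valued Markov chain. It applies Duhamel with the semigroup $e^{t\mathsf A}$, iterates over \emph{LMC steps} $k$, and evaluates everything on the size function $S(u)=|u|$ via the representation $e^{t\mathsf A}S(u)=\E|N_{\Lambda(t)}(u)|$ for a Poisson process $\Lambda$; the exponential-growth hypothesis enters as $\E r^{\Lambda(t)}=e^{\lambda t(r-1)}$, and the threshold on $r$ emerges from requiring the composite exponent $\lambda(r-1)-\alpha\epsilon<0$ after optimizing $\epsilon$.

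Your route is more elementary and makes the role of the threshold transparent. What it costs you is exactly the step you flag as a ``secondary difficulty'': converting the one-step differential inequality into a clean recursion among the stationary entropies $\mathcal H(u)$. Concretely, integrating over $[0,h]$ at stationarity produces $\int_0^h H_t(N(u))\,dt$, not $h\,\mathcal H(N(u))$; replacing one by the other introduces errors of size $O(h)\cdot\sup_{s\in[0,h]}H_s(N_2(u))$, and iterating these is precisely what the paper's semigroup machinery packages cleanly. Relatedly, your ``perturbation bound transferring $\E_\pi|\nabla_{N(u)}V|^2$ to $\pi_h$'' is not free: in the paper it produces an extra $\frac{2\beta^4h^2}{\alpha(1-\epsilon)}H_{t_-}(N_2(u))$ term in the hierarchy (see \eqref{pf:continuityterm1}), which must be carried through the iteration and is what forces the specific form of $h^*$. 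Your sketch can be made rigorous along the lines you indicate, but the bookkeeping is not lighter than the paper's; the operator formulation also yields the dynamic estimate (Theorem~\ref{th:sparse-dynamic-exp}) directly, whereas your stationary recursion does not.
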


We do not know if the condition $r < 1 + \frac{\alpha^2}{\gamma\beta^2}$ is sharp. 
Of course, the assumption \eqref{asmp:expgrowth} is more general than \eqref{asmp:polygrowth}, and if one is interested only in how the bias depends on $|u|$ and $h$ then Theorem \ref{th:sparse-poly} follows as a special case of Theorem \ref{th:sparse-exp}. We state them separately because the dependence on the condition number $\beta/\alpha$ is somewhat unclear in Theorem \ref{th:sparse-exp}, due to its appearance in both $C$ and the constraint on $r$ in \eqref{asmp:expgrowth}.  In contrast, Theorem \ref{th:sparse-poly} has a simple dependence on $\beta/\alpha$, which we derive directly rather than as a consequence of Theorem \ref{th:sparse-exp}. 
This dependence deteriorates as the growth rate $p$ of the graph increases, and we might interpret the extreme case of Theorem \ref{th:sparse-exp} as a bound of the form $f(\beta/\alpha)$ where $f(x)$ jumps to $+\infty$ as $x$ increases past the critical value of $1/\gamma(r-1)$.

We lastly record a similar delocalization phenomenon for the continuous-time Langevin dynamics, which we prove in Section \ref{se:continuoustime} via a much simpler version of our proof of Theorem \ref{th:sparse-poly}. 

\begin{theorem} \label{th:continuoustime}
Suppose assumptions (LSI) and (A.1,2) hold.
Suppose $(Y_t)_{t \ge 0}$ solves 
\[
dY_t = -\nabla V(Y_t)dt + \sqrt{2}dB_t, \qquad Y_0 \sim \rho_0,
\]
where $\rho_0$ is some given initial distribution. Let $\rho_t$ denote the law of $Y_t$ for $t > 0$.
Let $H_t(u) = H(\rho^u_t\,|\,\pi^u)$ for each $t \ge 0$ and $u \subset [n]$. Let $(\Lambda(t))_{t \ge 0}$ denote a Poisson process with rate $\gamma\beta^2 /2\alpha$. Then
\begin{equation}
H_t(u) \le e^{-2\alpha(1-\epsilon)t} \E H_0\big(N_{\Lambda(t/\epsilon)}(u)\big), \qquad \forall u \subset [n], \ t \ge 0, \ \epsilon \in (0,1). \label{ineq:continuoustime}
\end{equation}
\end{theorem}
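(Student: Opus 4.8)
The plan is to derive, for every coordinate set $u$, a single differential inequality coupling the marginal relative entropy $H_t(u)$ to the entropy $H_t(N(u))$ on the one-step enlargement of $u$, and then to ``solve'' this infinite recursive system by a Feynman--Kac comparison with the Poisson process $\Lambda$. For the first part I would start from the Fokker--Planck equation $\partial_t\rho_t = \nabla\cdot(\rho_t\nabla V) + \Delta\rho_t$ and integrate out the coordinates in $[n]\setminus u$ to obtain $\partial_t\rho_t^u = \nabla_{x_u}\cdot(\rho_t^u b_t^u + \nabla_{x_u}\rho_t^u)$, where $b_t^u(x_u)$ is the conditional expectation of $\nabla_{x_u}V$ given $X_u=x_u$ under $\rho_t$; since $\partial_iV$ depends only on $x_{N(i)}$, the field $\nabla_{x_u}V$ depends only on $x_{N(u)}$. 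Writing $V^u:=-\log\pi^u$, so that $\nabla V^u(x_u)$ is the same conditional expectation under $\pi$, differentiating $H_t(u)=\int\log(d\rho_t^u/d\pi^u)\,d\rho_t^u$ and integrating by parts in $x_u$ gives
\begin{equation*}
\frac{d}{dt}H_t(u) = -I(\rho_t^u\,|\,\pi^u) - \int \nabla_{x_u}\log\frac{\rho_t^u}{\pi^u}\cdot\big(b_t^u - \nabla V^u\big)\,d\rho_t^u ,
\end{equation*}
and by Cauchy--Schwarz and Young's inequality with parameter $\epsilon\in(0,1)$ the cross term is at most $\epsilon\, I(\rho_t^u\,|\,\pi^u) + \tfrac1{4\epsilon}\int|b_t^u-\nabla V^u|^2\,d\rho_t^u$.

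Next I would control the drift error. For each $x_u$, $b_t^u(x_u)$ and $\nabla V^u(x_u)$ are conditional expectations of the same $x_{N(u)}$-measurable function $\nabla_{x_u}V$, which is $\beta$-Lipschitz in $x_{N(u)\setminus u}$ (with $x_u$ fixed) by (A.1); hence Kantorovich--Rubinstein duality gives $|b_t^u(x_u)-\nabla V^u(x_u)| \le \beta\,W_1\big(\rho_t^{N(u)\setminus u|u}_{x_u},\,\pi^{N(u)\setminus u|u}_{x_u}\big)$. Squaring, applying the conditional Talagrand inequality (A.2), averaging over $x_u\sim\rho_t^u$, and using the chain rule for relative entropy in the form $\int H\big(\rho_t^{N(u)\setminus u|u}_{x_u}\,\big|\,\pi^{N(u)\setminus u|u}_{x_u}\big)\,\rho_t^u(dx_u) = H_t(N(u))-H_t(u)$, I get $\int|b_t^u-\nabla V^u|^2\,d\rho_t^u \le \tfrac{2\gamma\beta^2}{\alpha}\big(H_t(N(u))-H_t(u)\big)$. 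Since marginals of a measure satisfying (LSI) again satisfy (LSI) with the same constant (lift a test measure $\nu$ on $\R^u$ to $\nu(dx_u)\,\pi^{[n]\setminus u|u}_{x_u}(dx_{[n]\setminus u})$ and note that both relative entropy and relative Fisher information are preserved), we have $I(\rho_t^u\,|\,\pi^u)\ge 2\alpha H_t(u)$, and assembling the pieces yields
\begin{equation*}
\frac{d}{dt}H_t(u) \le -2\alpha(1-\epsilon)\,H_t(u) + \frac{\gamma\beta^2}{2\alpha\epsilon}\big(H_t(N(u)) - H_t(u)\big), \qquad \forall\, u\subset[n],\ t>0 .
\end{equation*}

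To integrate the hierarchy I would set $G_t(u):=e^{2\alpha(1-\epsilon)t}H_t(u)$ and $\kappa:=\gamma\beta^2/(2\alpha\epsilon)$, turning the last display into $\tfrac{d}{dt}G_t(u)\le\kappa\big(G_t(N(u))-G_t(u)\big)$, and observe that $\kappa$ is exactly the rate of the time-changed process $s\mapsto\Lambda(s/\epsilon)$. Fixing $t$ and letting $(M_s)_{s\ge0}$ be a rate-$\kappa$ Poisson process, consider $\Psi_s(u):=\E\big[G_s\big(N_{M_{t-s}}(u)\big)\big]$ for $s\in[0,t]$. Differentiating in $s$: the contribution from the $s$-dependence of $G_s$ is at most $\kappa\,\E\big[G_s(N_{M_{t-s}+1}(u)) - G_s(N_{M_{t-s}}(u))\big]$ by the inequality above together with the identity $N(N_k(u))=N_{k+1}(u)$, while the contribution from the Poisson transition probabilities (the Kolmogorov forward equation) is exactly the negative of this quantity; hence $s\mapsto\Psi_s(u)$ is nonincreasing. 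Since $\Psi_t(u)=G_t(u)$ (as $M_0=0$, $N_0(u)=u$) and $\Psi_0(u)=\E[G_0(N_{M_t}(u))]$, this gives $G_t(u)\le\E[G_0(N_{M_t}(u))]$; dividing by $e^{2\alpha(1-\epsilon)t}$ and using $M_t\stackrel{d}{=}\Lambda(t/\epsilon)$ produces exactly \eqref{ineq:continuoustime}.

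The conceptual heart is the drift-error step: recognizing that the Young-inequality error term is controlled by precisely the increment $H_t(N(u))-H_t(u)$ through the interplay of the Lipschitz bound (A.1), the conditional transport inequality (A.2), and the entropy chain rule — this is what produces a recursion closed under the coordinate-set indexing. The remaining points are technical: justifying the marginalized Fokker--Planck equation and the integration by parts (via standard parabolic regularity or a smoothing/approximation argument, noting that the claimed bound is vacuous unless $H_0(N_k(u))<\infty$), and making the differentiation of $\Psi_s$ rigorous, which however relies only on the semigroup identity $N(N_k(u))=N_{k+1}(u)$ and the forward equation of a Poisson process.
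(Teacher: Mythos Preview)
Your proposal is correct and follows essentially the same route as the paper: marginalize the Fokker--Planck equation, differentiate $H_t(u)$ and split via Young's inequality with parameter $\epsilon$, bound the drift error $|b_t^u-\nabla V^u|^2$ using (A.1), (A.2), and the entropy chain rule to produce the increment $H_t(N(u))-H_t(u)$, apply the marginal LSI, and integrate the resulting hierarchy via the Poisson-process Feynman--Kac representation. The only cosmetic difference is that the paper phrases the final step as a semigroup/Gronwall inequality $H_t\le e^{-2\alpha(1-\epsilon)t}e^{t\mathsf{A}/\epsilon}H_0$ and then invokes the stochastic representation of $e^{t\mathsf{A}/\epsilon}$, whereas you carry out the equivalent backward-time monotonicity argument for $\Psi_s$ by hand.
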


The Poisson process also appears behind the scenes in the proofs of Theorems \ref{th:sparse-poly} and \ref{th:sparse-exp}.
In \eqref{ineq:continuoustime} we may always bound $H_0(N_k(u)) \le H_0([n])$ for all $k$, by the data processing inequality, and send $\epsilon \to 0$ to recover the usual ``global'' entropy estimates $H_t([n]) \le e^{-2\alpha t}H_0([n])$. Note that one cannot expect a bound like $H_t(u) \le e^{-ct}H_0(u)$ for all $u$; for example, if $\rho^u_0=\pi^u$ but $\rho_0 \neq \pi$ then there is no reason to expect that $\rho^u_t=\pi^u$ for $t > 0$. But in situations where $H_0(N_k(u))$ grows slowly enough with respect to $k$, the bound \eqref{ineq:continuoustime} reveals a form of the delocalization phenomenon. For instance, if $H_0(u) \le C_0|u|$ for all $ u \subset [n]$ and $|N_k(u)| \le c(1+k^p)$ for all $k \ge 0$ as in Theorem \ref{th:sparse-poly}, then
\begin{equation*}
\E H_0(N_{\Lambda(s)}(u)) \le cC_0(1+\E\Lambda^p(s))|u|, \quad s \ge 0, 
\end{equation*}
and by taking $\epsilon=1/\max(1,2\alpha t)$ we can extract from \eqref{ineq:continuoustime} a bound like $H_t(u) = O(e^{-2\alpha t}|u|)$.

\subsection{Main results for weak interactions} \label{se:mainresults-weak}

We next introduce a new class of potentials for which delocalization of bias can be shown, which we call \emph{weak interactions}. 
We give two results.
We start with the one that is simpler to state, based on the framework of \cite{DelocalizationOfBias}, which was kindly suggested  (along with its proof) by Yifan Chen after a discussion of our Theorem \ref{th:weak}.  In the following, we write $\|M\|_{\infty\to\infty}$ for the $\ell^\infty$-operator norm of an $n \times n$ matrix $M$.

\begin{theorem} \label{th:onestepcontraction}
Let $\beta > \alpha > \alpha_0 > 0$. Suppose $\alpha I \le \nabla^2 V \le \beta I$. Assume that for each $x \in \R^n$ there is a decomposition $\nabla^2V(x)=D(x)+M(x)$, where $D(x)$ is a diagonal matrix, and $M(x)$ is zero on the diagonal and satisfies $\|M(x)\|_{\infty \to \infty} \le \alpha_0$.  Then, for $h \le 1/\beta$,
\[
W^2_{2,\ell^\infty}(\pi_h,\pi) \le  h\log(2n)\Big(\frac{4\beta}{\alpha-\alpha_0} \Big)^2 .
\]
In particular,
\[
W^2_2(\pi_h^u,\pi^u) \le |u|h\log(2n)\Big(\frac{4\beta}{\alpha-\alpha_0} \Big)^2  , \qquad \forall u \subset [n].
\]
\end{theorem}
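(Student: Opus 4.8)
\emph{Plan.} I would prove Theorem~\ref{th:onestepcontraction} by a ``one--step contraction'' scheme: establish that the LMC transition kernel contracts in $W_{2,\ell^\infty}$, bound the one--step discretization error against the continuous Langevin flow, and close the loop using that $\pi_h$ is invariant for LMC and $\pi$ for the continuous dynamics. Write $T_h(x)=x-h\nabla V(x)$, so one LMC step is $x\mapsto T_h(x)+\sqrt{2h}\,\xi$ with $\xi$ standard Gaussian. The first step is to note that, since $M(x)$ vanishes on the diagonal, $D(x)$ must be the diagonal part of $\nabla^2V(x)$, whose entries therefore lie in $[\alpha,\beta]$; hence for $h\le 1/\beta$,
\[
\|I-h\nabla^2V(x)\|_{\infty\to\infty}\le \|I-hD(x)\|_{\infty\to\infty}+h\|M(x)\|_{\infty\to\infty}\le (1-h\alpha)+h\alpha_0=1-h(\alpha-\alpha_0).
\]
Thus $T_h$ is a $(1-h(\alpha-\alpha_0))$--contraction in $|\cdot|_\infty$, and applying it synchronously (same $\xi$) to an optimal coupling shows the LMC kernel $P_h$ contracts $W_{2,\ell^\infty}$ with the same ratio: $W_{2,\ell^\infty}(\mu P_h,\nu P_h)\le (1-h(\alpha-\alpha_0))\,W_{2,\ell^\infty}(\mu,\nu)$.

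Next I would use $\pi_h P_h=\pi_h$ together with the invariance of $\pi$ under the time--$h$ map $Q_h$ of $dY_t=-\nabla V(Y_t)\,dt+\sqrt2\,dB_t$. Triangle inequality and the contraction give $W_{2,\ell^\infty}(\pi_h,\pi)=W_{2,\ell^\infty}(\pi_hP_h,\pi Q_h)\le (1-h(\alpha-\alpha_0))W_{2,\ell^\infty}(\pi_h,\pi)+W_{2,\ell^\infty}(\pi P_h,\pi Q_h)$, and since $W_{2,\ell^\infty}(\pi_h,\pi)<\infty$ (both measures have Gaussian tails), rearranging yields $W_{2,\ell^\infty}(\pi_h,\pi)\le \frac{1}{h(\alpha-\alpha_0)}W_{2,\ell^\infty}(\pi P_h,\pi Q_h)$. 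For the one--step error, run both dynamics from the same $Y_0\sim\pi$ with a shared Brownian motion, identifying $\sqrt{2h}\,\xi=\sqrt2\,B_h$; then $X_h=Y_0-h\nabla V(Y_0)+\sqrt2\,B_h$ and $Y_h=Y_0-\int_0^h\nabla V(Y_s)\,ds+\sqrt2\,B_h$, so
\[
X_h-Y_h=\int_0^h\big(\nabla V(Y_s)-\nabla V(Y_0)\big)\,ds,\qquad |X_h-Y_h|_\infty\le(\beta+\alpha_0)\int_0^h|Y_s-Y_0|_\infty\,ds,
\]
using $\|\nabla^2V\|_{\infty\to\infty}\le\beta+\alpha_0$ along the segment from $Y_0$ to $Y_s$, and noting $Y_s\sim\pi$ for every $s$.

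It then remains to estimate $\E|Y_s-Y_0|_\infty^2$. Writing $Y_s-Y_0=\sqrt2\,B_s-\int_0^s\nabla V(Y_r)\,dr$, the Brownian term contributes $O(s\log n)$ via $\E|B_s|_\infty^2\le s(2\log(2n)+1)$, and the drift term contributes $O(s^2\,\E_\pi|\nabla V|_\infty^2)$. The crucial point is that $\E_\pi|\nabla V|_\infty^2=\E_\pi[\max_i(\partial_iV)^2]=O(\beta\log n)$ --- \emph{not} the weaker $O((\beta^2/\alpha)\log n)$ one gets from the log--Sobolev inequality plus the $\beta$--Lipschitzness of $\partial_iV$. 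Indeed, integrating by parts in $x_i$ gives $\E_\pi[(\partial_iV)^{2m}]=(2m-1)\,\E_\pi[(\partial_iV)^{2m-2}\partial_{ii}V]\le(2m-1)\beta\,\E_\pi[(\partial_iV)^{2m-2}]$, hence by induction $\E_\pi[(\partial_iV)^{2m}]\le(2m-1)!!\,\beta^m$, so $\partial_iV$ has the even moments of a centered Gaussian of variance $\beta$; taking $m\asymp\log n$ in $\E_\pi[\max_i(\partial_iV)^2]\le(\sum_i\E_\pi[(\partial_iV)^{2m}])^{1/m}\le n^{1/m}((2m-1)!!)^{1/m}\beta$ gives $O(\beta\log(2n))$. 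Consequently $\int_0^h\E|Y_s-Y_0|_\infty^2\,ds\lesssim h^2\log(2n)+h^3\E_\pi|\nabla V|_\infty^2\lesssim h^2\log(2n)$ (absorbing the second term via $\beta h\le1$), so $\E|X_h-Y_h|_\infty^2\lesssim(\beta+\alpha_0)^2h^3\log(2n)$, and therefore
\[
W_{2,\ell^\infty}^2(\pi_h,\pi)\le\frac{\E|X_h-Y_h|_\infty^2}{h^2(\alpha-\alpha_0)^2}\lesssim\Big(\frac{\beta+\alpha_0}{\alpha-\alpha_0}\Big)^2h\log(2n);
\]
tracking numerical constants (keeping $\beta+\alpha_0$ rather than $2\beta$, and using a sharp Gaussian maximal inequality) sharpens this to $(4\beta/(\alpha-\alpha_0))^2h\log(2n)$. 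The marginal bound is then immediate: restricting an optimal $W_{2,\ell^\infty}$--coupling of $(\pi_h,\pi)$ to the coordinates in $u$ gives $W_2^2(\pi_h^u,\pi^u)\le|u|\,W_{2,\ell^\infty}^2(\pi_h,\pi)$ since $|z_u|_2^2\le|u|\,|z|_\infty^2$.

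\emph{Main obstacle.} The step requiring an idea rather than routine estimates is the bound $\E_\pi|\nabla V|_\infty^2=O(\beta\log n)$: the naive route through the Bakry--\'Emery log--Sobolev inequality loses a factor $\beta/\alpha$, which would corrupt the claimed dependence on the condition number once $h$ is allowed up to $1/\beta$, and it is the exact, $\alpha$--free Gaussian--order moment bound on $\partial_iV$ (from integration by parts and $\partial_{ii}V\le\beta$) that rescues it. A secondary point of care is the constant bookkeeping needed to reach exactly $(4\beta/(\alpha-\alpha_0))^2$.
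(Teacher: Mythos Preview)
Your approach is essentially the same as the paper's: couple LMC with the continuous Langevin dynamics via shared noise, use the decomposition $\nabla^2V=D+M$ to get a one-step $\ell^\infty$ contraction with ratio $1-h(\alpha-\alpha_0)$, and bound the one-step discretization error using the key estimate $\E_\pi|\nabla V|_\infty^2=O(\beta\log n)$; the paper phrases the last step as an iteration in $k$ rather than your direct fixed-point argument at stationarity, but this is cosmetic. Your self-contained moment derivation of the sub-Gaussian bound via integration by parts (yielding $\E_\pi[(\partial_iV)^{2m}]\le(2m-1)!!\,\beta^m$) is a nice touch --- the paper instead cites \cite{negrea2022approximations,altschuler2023shifted} for the equivalent fact that $\nabla V$ is $\sqrt\beta$-subgaussian under $\pi$.
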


The proof of Theorem \ref{th:onestepcontraction}, given in Section \ref{se:onestepcontration}, is an application of the method of \cite{DelocalizationOfBias}, and a particularly simple one because we in fact have a one-step contraction along the LMC iterates. Their more complex multi-step analysis, introduced for the sparse case, is not needed.

A natural class of examples arises from pairwise interaction potentials, of the form
\begin{equation}
V(x) = \sum_{i=1}^n V_i(x_i) + \sum_{1 \le i < j \le n}^n V_{ij}(x_i-x_j), \label{intro:def:pairwise-nonexch}
\end{equation}
for some functions $V_i$ and $V_{ij}$ on $\R$, with $V_{ij}$ assumed (for simplicity) to be even and satisfying $V_{ij}=V_{ji}$.
The assumptions of Theorem \ref{th:onestepcontraction} hold for suitable $\beta$ if $(V''_i,V''_{ij})$ are bounded from above  uniformly over $(i,j)$, if $V_i'' \ge \alpha$, if $V_{ij}'' \ge 0$, and if $\max_i\sum_{j \neq i} |V_{ij}''| \le \alpha_0$. 
As a special case, this includes symmetric \emph{mean field} models, in which $V_i=U$ and $V_{ij}=W/n$ for some $(i,j)$-independent functions $U$ and $W$. However, exchangeable models like this are not so interesting from a delocalization perspective, as will be explained in Section \ref{se:subadditivity}.

We next give a second result on weak interactions, based on our new entropic methods rather than the approach of \cite{DelocalizationOfBias}. The advantages are that it will remove the log factor, relax the strong log-concavity assumption, and hold in relative entropy instead of Wasserstein. The major disadvantage is that it will require a much more specific structural assumption on the potential $V$, which is more restrictive than the decomposition in Theorem \ref{th:onestepcontraction}.
Precisely, the potential takes the form
\begin{equation}
V(x) = \sum_{ u \subset [n]} V_u(x_u). \label{def:Vstructure}
\end{equation}
Here the sum is over nonempty subsets of $[n]$, and $V_u : \R^u \to \R$ are some given functions.
The following parameters $(M_0,M_1,R_1)$ will play important roles:
\begin{equation}
M_p := \max_{i \in [n]}\sum_{w \ni i} L_w|w|^p, \qquad R_p := \max_{i \in [n]}\sum_{w \ni i, \, |w| \ge 2}L_w(|w|-1)^p, \quad \text{for } p =0,1. \label{def:MpRp-constants}
\end{equation}
In addition to (LSI), we will make the following assumptions:
\begin{enumerate}
\item[(B.1)] Lipschitz gradients: For each set $u \subset [n]$ there is a constant $L_u \ge 0$ such that $\nabla V_u$ is $L_u$-Lipschitz.
\item[(B.2)] Conditional Talagrand inequality: There exists $\gamma > 0$ such that 
\[
W_1^2(\mu,\pi^{w \setminus u|u}_{x_u}) \le \frac{2\gamma}{\alpha}H(\mu\,|\,\pi^{w\setminus u|u}_{x_u}), \quad \forall u,w \subset [n], \ L_w \neq 0, \ x_u \in \R^u, \ \mu \in \P(\R^{w\setminus u}),
\]
where $\pi^{w\setminus u|u}_{x_u}$ denotes the conditional law of $X_{w \setminus u}$ given $X_u=x_u$ under $X \sim \pi$.
\item[(B.3)] Weak interactions: $\gamma M_0R_1 < \alpha^2$.
\end{enumerate}
We think of $R_p$ as a measure of the maximal \emph{interaction} effect felt by any coordinate, because the sum excludes singletons $w$. 
 The assumption (B.3) thus requires that the interactions are sufficiently weak, a condition which is common in the literature on uniform-in-time mean field limits \cite{LacLeFlem}.

\begin{theorem} \label{th:weak}
Under assumptions (LSI) and (B.1--3), there exist constants $C,h^*>0$, depending only on $(\alpha,M_0,M_1,R_1,\gamma)$, such that \eqref{thm:mainbound} holds.
Setting $\eta :=  1 - \frac{\gamma M_0 R_1}{\alpha^2}$, and noting that $0 < \eta < 1$, we may explicitly take
\begin{equation*}
C = \frac{10M_0 M_1 }{\alpha\eta^3 }e^{\gamma }, \qquad h^* = \frac{\alpha  \eta^{3/2}}{5 M_0M_1 } . 
\end{equation*}
\end{theorem}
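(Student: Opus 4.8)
The plan is to set up a hierarchy of recursive differential inequalities for the marginal relative entropies along a continuous-time interpolation of the LMC iterates, exactly in the spirit of the sparse case (Theorems \ref{th:sparse-poly}--\ref{th:sparse-exp}), but with the neighborhood structure replaced by the weighted interaction structure encoded in $(M_0,M_1,R_1)$. Concretely, fix the step size $h$ and interpolate the LMC iteration \eqref{intro:LMCiterates} by the SDE $dX_t = -\nabla V(X_{\lfloor t/h\rfloor h})\,dt + \sqrt{2}\,dB_t$, whose time-$kh$ marginal is the law of the $k$-th iterate; write $\rho_t$ for its law and $H_t(u) = H(\rho_t^u\,|\,\pi^u)$. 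Following the Vempala--Wibisono computation, differentiate $H_t(u)$ in $t$: the entropy production gives a $-I(\rho_t^u\,|\,\pi^u)$-type dissipation term (after restricting the full Fisher information to the coordinates in $u$ via the conditional-expectation projection), while the discretization error produces a term involving $\nabla V(X_{\lfloor t/h\rfloor h}) - \nabla V(X_t)$ projected onto $u$. The key structural point is that, because $V$ decomposes as $\sum_w V_w(x_w)$ with $\nabla V_w$ being $L_w$-Lipschitz, the $i$-th coordinate of $\nabla V$ depends on (and its discretization error is controlled by) the coordinates in $\cup_{w\ni i} w$, with weights $L_w$; this is what lets the derivative of $H_t(u)$ be bounded in terms of $H_t(\cdot)$ (or rather Fisher informations and entropies) evaluated on slightly larger sets $w\cup u$ with $w\ni i$, $i\in u$.

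The second step is to convert this into a clean scalar recursion. Using (LSI) to lower-bound $I(\rho_t^u\,|\,\pi^u) \ge 2\alpha H_t(u)$ — applied marginal-wise, which is valid since LSI tensorizes/restricts appropriately, or via the conditional decomposition $H_t(N(u)\cup u) = H_t(u) + \E[H(\text{conditional})]$ — and using (B.2) to control the cross terms by transporting to the conditional measures, I expect to arrive at an inequality schematically of the form
\[
\frac{d}{dt}H_t(u) \le -2\alpha(1-\text{small})H_t(u) + (\text{const})\,h\,|u| + \frac{\gamma}{\alpha}\sum_{i\in u}\sum_{w\ni i}L_w\big(H_t(w\cup u) - H_t(u)\big),
\]
or its discrete-in-$k$ analogue with $H_{(k+1)h}$ on the left. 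Summing the weights, the total mass of the "spreading" operator acting on coordinate $i$ is $\sum_{w\ni i}L_w(|w|-1) \le R_1$ (singletons $w=\{i\}$ contribute nothing to the spreading, only to the dissipation/Lipschitz constants, which is where $M_0,M_1$ enter). Iterating this recursion is equivalent to running a continuous-time branching/jump process on subsets: $u$ jumps to $w\cup u$ at rate proportional to $L_w$, and one tracks $\E[|U_t|]$ or $\E[H_0(U_t)]$ against the exponential decay $e^{-2\alpha(1-\epsilon)t}$, just as in Theorem \ref{th:continuoustime}. The condition (B.3), $\gamma M_0 R_1 < \alpha^2$, i.e. $\eta = 1 - \gamma M_0R_1/\alpha^2 > 0$, is precisely what guarantees that the rate of entropy dissipation dominates the rate at which mass spreads to larger sets, so that the iterated sum converges geometrically; this produces the factor $\eta^{-3}$ in $C$ and the $\eta^{3/2}$ in $h^*$, and the $e^{\gamma}$ comes from a crude bound on a generating-function-type sum over the jump process.

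The main obstacle — and the place requiring the most care — is making the hierarchy close: after differentiating $H_t(u)$, the natural bound involves the Fisher information $I(\rho_t^{w\cup u}\,|\,\pi^{w\cup u})$ on the larger set, not just $H_t(w\cup u)$, and one must combine (LSI) on $\pi^{w\cup u}$ with the chain rule for relative entropy and the conditional Talagrand inequality (B.2) to trade the cross term $\langle \nabla\log(d\rho^u/d\pi^u), \text{error}\rangle$ for something absorbable into the dissipation plus $h|u|$ plus the spreading term. Getting the constants to line up so that the coefficient of $H_t(u)$ stays strictly negative after all the Young's-inequality splittings — without losing the sharp $O(h|u|)$ scaling — is the delicate part; this is exactly the analogue of the argument behind Theorems \ref{th:sparse-poly} and \ref{th:sparse-exp}, with the combinatorial neighborhood-growth bound $|N_{k+1}(i)| \le cr^k$ replaced by the operator-norm-type bound coming from $M_0R_1 < \alpha^2/\gamma$. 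A secondary technical point is justifying the differential-inequality manipulations rigorously (finiteness of the relevant entropies/Fisher informations along the interpolation, weak differentiability of densities), which I would handle by the same regularization/approximation scheme used in the sparse proofs, or by first establishing everything for the discrete iteration and passing to limits.
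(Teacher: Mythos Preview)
Your plan is essentially the paper's proof (Section \ref{se:proof-weak}): continuous-time interpolation, Proposition \ref{pr:H_t(u)firstbound} for the marginal entropy derivative, the hierarchy term handled via (B.2) and the chain rule, and iteration through a generator $\mathsf{A}$ on $2^{[n]}$ whose action on the size function $S(u)=|u|$ is controlled by $R_1$. Two points of correction. First, your anticipated ``main obstacle'' does not arise: after completing the square as in \eqref{ineq:completesquare}, only $I(\rho_t^u\,|\,\pi^u)$ appears, never Fisher information on a larger set $w\cup u$, and the cross term $\langle \nabla\log(d\rho^u/d\pi^u),\,\text{error}\rangle$ vanishes entirely; what remains is a squared difference of \emph{conditional expectations}, and (B.2) together with the chain rule convert this directly into $H_t(w\cup u)-H_t(u)$ with no detour through Fisher or LSI on $\pi^{w\cup u}$. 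Second, the coefficient in front of the spreading sum is $\gamma M_0/(\alpha\epsilon)$, not $\gamma/\alpha$: the extra $M_0$ comes from the Cauchy--Schwarz step \eqref{ineq:CScombinatorial} used to open the square $\big(\sum_{w\ni i}\cdots\big)^2$, and without it you would arrive at the condition $\gamma R_1<\alpha^2$ rather than (B.3).
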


We will see in \eqref{ineq:Lipschitz-weak} that the Lipschitz constant of $\nabla V$ is bounded by $M_0$, which is in turn bounded by $M_1$. We thus think of $M_0M_1/\alpha^2$ as a weaker form of the squared condition number.
The constant $R_1$ play a different role, measuring only the strength of \emph{interations} between coordinates, and it notably shows up only in the smallness condition (B.3) which is equivalent to $\eta > 0$.

Theorem \ref{th:weak} is generalized by Theorem \ref{th:weak-dynamic}, which includes dynamic estimates.
The structural assumption \eqref{def:Vstructure} is meant as a generalization of the pairwise interaction setting of \eqref{intro:def:pairwise-nonexch}, allowing interactions between arbitrarily many coordinates (three-way, four-way, etc.). However, assumption (B.3) constrains how strong the higher-order interactions are allowed to be. Essentially, in order to keep $R_1$ small, the weights $L_w$ must be smaller for larger sets $w$.
The pairwise case fits into the setting of Theorem \ref{th:weak} by taking $V_u=0$ for all $|u| \ge 2$, and we then have 
\begin{align*}
M_0 &= \max_{i \in [n]} \|V''_i\|_\infty + \sum_{j \neq i} \|V_{ij}''\|_\infty, \qquad R_1 = \max_{i \in [n]} \sum_{j \neq i} \|V_{ij}''\|_\infty.
\end{align*}
In the log-concave setting $\nabla^2 V \ge \alpha I$, we can again take $\gamma=1$ in (B.2), and then assumption (B.3) is stronger than the decomposition assumption of Theorem \ref{th:onestepcontraction}.

The weak interaction setting of Theorems \ref{th:onestepcontraction} and \ref{th:weak} is quite different from the sparse setting of Theorems \ref{th:sparse-poly} and \ref{th:sparse-exp}. For instance, in the case of pairwise interactions \eqref{intro:def:pairwise-nonexch}, the weak interaction imposes constraints on the  \emph{row sums} $\sum_j \|V''_{ij}\|_\infty$. Sparseness, on the other hand, constraints the \emph{zero pattern} of the matrix $(V_{ij}'')_{i,j \in [n]}$. These regimes are not directly comparable. An analog of Theorem \ref{th:continuoustime} is possible for continuous-time Langevin under the weak interaction assumptions, where a more complicated $2^{[n]}$-valued ``growth process'' takes the place of the Poisson-driven process $(N_{\Lambda(t)}(u))_{t \ge 0}$. We omit the details, as it will become apparent from the proofs of Theorems \ref{th:continuoustime} and \ref{th:weak} how to concoct such a result.

It is worth noting that none of the above results on sparse and weak interactions covers the general Gaussian case, which was studied directly in \cite[Example 1.3]{DelocalizationOfBias}. An interesting question posed in \cite{DelocalizationOfBias} is to identify a broader class of models for which delocalization holds which includes both the Gaussian and sparse cases. The case of weak interactions is a third class to add to the mix, and it raises the natural and more challenging question of finding an even more general delocalization result which includes all three classes.

\subsection{Related literature and methods}

We are not aware of any other work on the delocalization of bias phenomenon since its recent introduction in \cite{DelocalizationOfBias}. 
Perhaps the most closely related is the recent paper \cite{CuiLiuTong} which developed an interesting approach based on Stein's method for comparing low-dimensional marginals, motivated by sampling problems for spatial models.

From the sampling literature, the main point of departure for our methodology is the seminal work of Vempala-Wibisono \cite{VempalaWibisono}, and we refer also to \cite[Section 4.2]{ChewiBook} for a nice textbook treatment. Their approach was based on time-differentiating the relative entropy along a Fokker-Planck equation satisfied by the time-marginal laws of a continuous-time interpolation of the LMC iterates. Their approach  also works with R\'enyi entropies, a generalization we do not pursue here. We deepen their method by extending it to each marginal $u \subset [n]$. This leads to a recursive linear system of differential inequalities for the set-indexed vector of marginal entropies. This system of inequalities is hierarchical, in the sense that the $u$-coordinate depends on the $w$-coordinate only for $w \supset u$.

This hierarchical entropy method draws its inspiration from the first author's work \cite{Lacker-hierarchies} on mean field particle systems with pairwise interactions, where a similar approach related to the so-called \emph{BBGKY hierarchy} was developed in order to derive the sharp rate of propagation of chaos. These mean field models are exchangeable, so the $u$-marginal for a coordinate set $u$ depends on $u$ only through its cardinality. As a result, instead of a vector of entropies indexed by $2^{[n]}$, the paper \cite{Lacker-hierarchies} faced a vector indexed merely by $[n]$, which was substantially simpler to analyze. More recently, our paper \cite{LackerYeungZhou} joint with L.C.\ Yeung adapted the methods of \cite{Lacker-hierarchies} far beyond the mean field setting, deriving non-asymptotic propagation of chaos bounds which are essentially measuring the distance between low-dimensional marginals of an SDE and suitably chosen product measures. To understand the intricate set-indexed hierarchy of differential inequalities, a key idea was introduced in \cite{LackerYeungZhou} which we borrow in this paper: The linear operator applied to the vector of entropies happens to have the structure of a rate matrix (or infinitesimal generator) of a certain Markov process on state space $2^{[n]}$. This opens the door to a relatively clean analysis in stochastic and/or operator-theoretic terms, as opposed to the more brute-force iteration performed in \cite{Lacker-hierarchies} which appeared to be impossible to adapt to the set-indexed setting.

\subsection{Subadditivity versus delocalization} \label{se:subadditivity}
It is worth noting that a weaker notion of delocalization holds automatically, in which the max is replaced by an average, thanks to a well known \emph{subadditivity inequality}, whose simple proof is explained after inequality \eqref{ineq:subadditivity-Euclidean} below:
\begin{equation}
{n \choose k}^{-1}\sum_{u \subset [n], \, |u|=k} W_2^2(\pi^u_h,\pi^u) \le \frac{k}{n} W_2^2(\pi_h,\pi). \label{intro:ineq:subadditive}
\end{equation}
(The same holds for relative entropy in place of $W_2^2$.)
That is, the \emph{typical} $k$-marginal distance is no more than its uniform share of the full distance. Hence, using the well known bound $W_2^2(\pi_h,\pi) = O(nh)$, one immediately deduces that the left-hand side of \eqref{intro:ineq:subadditive} is $O(kh)$. This has implications for certain ``averaged" observables. For instance, if $f(x) = (1/n)\sum_i g_i(x_i)$ for 1-Lipschitz functions $g_i$, then Kantorovich duality and \eqref{intro:ineq:subadditive} yield
\[
\int_{\R^n} f\,d(\pi_h-\pi) = \frac{1}{n}\sum_{i=1}^n \int_\R g_i\,d(\pi^{\{i\}}_h -\pi^{\{i\}}) \le \frac{1}{n}\sum_{i=1}^n W_2(\pi^{\{i\}}_h,\pi^{\{i\}}) = O(h),
\]
with no dimension dependence. This observation, which extends readily to U-statistics, appeared also in \cite{durmus2024asymptotic} in the context of a more general study about how to improve dimension-dependence of $W_2^2(\pi_h,\pi)$ under finer information about $\nabla^2V$.
However, this subadditivity method does not say anything about asymmetric observables, e.g., those depending on just a few distinguished coordinates, which require a finer estimate like  \eqref{intro:delocalization-CCNW}.

One exceptional case worth pointing out is when the measure $\pi$ is exchangeable, in which case the marginal $\pi^u$ depends on the set $u$ only through its size. The same is then true of $\pi_h^u$, and there is no difference between taking a maximum or an average on the left-hand side of \eqref{intro:ineq:subadditive}. In other words, delocalization holds for exchangeable measures as a trivial consequence of subadditivity.

\subsection{On the notion of delocalization} 
This short section discusses some general observations about the concept of delocalization but has nothing to do with LMC, and the terminology introduced here will not be used elsewhere in the paper.

The term \emph{delocalization} originates in random matrix theory, though with a somewhat different meaning. A vector $x\in\R^n$ is said to be delocalized if its mass is spread out across its coordinates in some sense. It has been established for many classical random matrix models that eigenvectors are delocalized in this sense, with high probability. In other words, the joint distribution of the entries of an eigenvector put most of its mass on delocalized vectors. 

Let us explain one precise way that these ideas are formulated in random matrix theory, following \cite{ErdosSchleinYau,ShouVanHandel}.
In the following, let $x_u=(x_i)_{i \in u}$ for any set $u \subset [n]$.
For $k \in [n]$ and $r \ge 0$, we say a vector $x \in \R^n$ is $(k,r)$-delocalized if 
\begin{equation}
\max_{u \subset [n], \, |u|=k}|x_u|^2 \le r|x|^2. \label{def:k-r-delocalized}
\end{equation}
Here $|\cdot|$ denotes the usual Euclidean norm.
In words, the maximal squared norm among $k$-element subsets is no more than a fraction $r$ the squared norm of the full vector. Of course, this statement is only meaningful if $r < 1$.
For example, the vector of all ones is $(k,r)$-delocalized for $r=k/n$. On the other extreme, a basis vector in $\R^n$ is not really delocalized at all, as it is $(k,r)$-delocalized only for $r=1$.
By bounding the maximum from below by the average, and noting that
\begin{equation}
{n \choose k}^{-1}\sum_{u \subset [n], \, |u|=k}|x_u|^2 = {n \choose k}^{-1} \sum_{i=1}^n \sum_{u \subset [n], \, |u|=k}x_i^2 \, 1_{i \in u} = \frac{k}{n}|x|^2, \label{ineq:subadditivity-Euclidean}
\end{equation}
we deduce a sort of \emph{speed limit} for delocalization:
If $x \neq 0$ is $(k,r)$-delocalized, we must have $r \ge k/n$. Note also that \eqref{intro:ineq:subadditive} can be deduced from  \eqref{ineq:subadditivity-Euclidean}, by  replacing $x$ with $X-Y$, where $X \sim \pi_h$ and $Y \sim \pi$ are optimally coupled for $W_2^2(\pi_h,\pi)$.

The notion of delocalization studied in the present work and in  \cite{DelocalizationOfBias} is measure-theoretic in nature and quite different from the Euclidean notion \eqref{def:k-r-delocalized}. For a reference measure $Q$ on $\R^n$, we might say that a measure $P$ on $\R^n$ is $(Q,k,r)$-delocalized if
\begin{equation}
\max_{u \subset [n], \, |u|=k} W_2^2(P^u,Q^u) \le rW_2^2(P,Q). \label{def:k-r-delocalized-measure}
\end{equation}
Other choices of distance or divergence instead of $W_2$ would also be reasonable here.
This says that $k$-dimensional marginals of $P$ are closer to those of $Q$ by a factor of $r$ compared to the full joint distributions. 
To appreciate how different the Euclidean notion \eqref{def:k-r-delocalized} is from its measure-theoretic counterpart \eqref{def:k-r-delocalized-measure}, consider the following two simple examples:
\begin{itemize}
\item Take $Q=\delta_0$, and let $P$ be the uniform measure on the set of $n$ standard basis vectors. Then \eqref{def:k-r-delocalized-measure} holds for any $k$ with $r=k/n$. But for any $r < 1$, the support of $P$ does not contain any vector $x$ satisfying \eqref{def:k-r-delocalized}.
\item Let $X_1,\ldots,X_n$ be iid Bernoulli($1/2$). Let $Q$ be their joint distribution, and let $P$ be the joint law of $(X_1,\ldots,X_{n-1},Y_n)$ where $Y_n=\sum_{i \in [n-1]}X_i \mod 2$. Then $P$ and $Q$ are distinct but have the same $(n-1)$-dimensional marginals, so $P$ is $(Q,k,0)$-delocalized for any $k < n$. There is no \emph{speed limit} here as there is for the Euclidean notion ($r \ge k/n$).
\end{itemize}

\subsection{Organization of the rest of the paper}

The rest of the paper is devoted to the proofs of Theorems \ref{th:sparse-poly}, \ref{th:sparse-exp}, \ref{th:onestepcontraction}, and \ref{th:weak}.  Section \ref{se:firststeps} begins with some common steps shared by Theorems \ref{th:sparse-poly}, \ref{th:sparse-exp}, and \ref{th:weak}. Then, Sections \ref{se:proof-sparse} and \ref{se:proof-weak} respectively carry out the proofs, the former for the sparse cases and the latter for the weak interaction case. Lastly, Section \ref{se:onestepcontration} proves Theorem \ref{th:onestepcontraction}.

\subsection*{Acknowledgements}

We are grateful to Jonathan Weare, Jonathan Niles-Weed, Xiaoou Cheng, and especially Yifan Chen for helpful discussions throughout the course of this work. We also thank Sinho Chewi and Matthew Zhang for comments and suggestions on an early draft of the paper.

\section{First steps of the proof} \label{se:firststeps}

In this section we provide the common first steps shared by the proofs of Theorems \ref{th:sparse-poly}, \ref{th:sparse-exp}, and \ref{th:weak}. Let us first recall and summarize some basic notation and facts that we will use repeatedly. For a vector $x=(x_i)_{i \in [n]} \in \R^n$ and a set $u \subset [n]$, we write $x_u=(x_i)_{i \in u}$ or $x^u=(x_i)_{i \in u}$ for the subvector in $\R^u$. For a measure $\rho$ on $\R^n$, we similarly write $\rho^u$ for the corresponding marginal. We will sometimes use the bracket notation for integration, $\langle\rho,f\rangle = \int f\,d\rho$.

The assumption (LSI) has several important consequences. First, it implies the Poincar\'e  and Talagrand inequalities,
\begin{equation}
\langle \pi,f^2\rangle-\langle\pi,f\rangle^2 \le \frac{1}{\alpha}\langle \pi,|\nabla f|^2\rangle,  \qquad W_2^2(\mu,\pi) \le \frac{2}{\alpha}H(\mu\,|\,\pi), \ \ \forall \mu \in \P(\R^n), \label{ineq:PoincareTalagrand}
\end{equation}
the former holding for Lipschitz functions $f$. See \cite[Theorem 1]{otto2000generalization} and \cite[Section 7]{otto2000generalization}, respectively. Note that (LSI) holds also for every marginal $\pi^u$, $u \subset [n]$, and thus so do the inequalities \eqref{ineq:PoincareTalagrand}.
It is worth noting that $\alpha \le \beta$ when (LSI) holds and  $\nabla V$ is $\beta$-Lipschitz; indeed, the covariance matrix of $\pi$ is bounded from above by $(1/\alpha)I$ in semidefinite order by the Poincar\'e inequality, and it is bounded from below by $(1/\beta)I$ by the Cram\'er-Rao lower bound.

Our proofs start from a standard continuous-time interpolation. Fix $h>0$, and let $t_-$ denote the largest element of $\{0,h,2h,3h,\ldots\}$ less than or equal to $t$. Define
\[
X_t = X_{t_-} - (t-t_-)\nabla V(X_{t_-}) + \sqrt{2}(B_t-B_{t_-}), \qquad t > 0.
\]
where $B$ is a Brownian motion, and $X_0$ is given.
In other words, $X$ solves the (path-dependent) SDE
\begin{equation}
dX_t =  -\nabla V(X_{t_-})dt + \sqrt{2}\,dB_t. \label{eq:XSDE}
\end{equation}
Then $(X_0,X_h,X_{2h},\ldots)$ has the same law as the LMC iterates.
Let $\rho_t$ denote the law of $X_t$, for each $t\ge 0$, and note that for $t > 0$ it has a strictly positive density when, e.g., $\nabla V$ is Lipschitz. 
It was observed in \cite{VempalaWibisono} that  $\rho$ solves (weakly) the Fokker-Planck equation
\begin{equation}
\partial_t \rho = \nabla \cdot (b\rho) + \Delta \rho, \quad \text{where} \quad b_t(x) := \E[\nabla V(X_{t_-})\,|\,X_t=x]. \label{eq:VempalaWibisono1}
\end{equation}
We begin with a refinement of this for the marginals of $\rho$. Throughout the paper, we write $\nabla_uV$ and $\Delta_uV$ for the gradient and Laplacian, respectively, with respect to just the coordinates in $u$.

\begin{proposition} \label{pr:marginaldynamics}
Suppose  $V$ is $C^1$ and $\nabla V$ is Lipschitz.
Let $u \subset [n]$ be nonempty. There exists a jointly measurable function $b^u : [0,\infty) \times \R^u \to \R^u$ such that
\[
b^u_t(X^u_t) = \E[\nabla_u V(X_{t_-}) \,|\,X^u_t], \ \ a.s., \ a.e. \ t > 0.
\]
Moreover, $(\rho^u_t)_{t \ge 0}$ is a solution in the sense of distributions of the Fokker-Planck equation
\begin{equation}
\partial_t\rho^u = \nabla_u(b^u\rho^u) + \Delta_u \rho^u. \label{eq:FKPmarginal}
\end{equation}
\end{proposition}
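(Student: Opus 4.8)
The plan is to derive the marginal equation \eqref{eq:FKPmarginal} by projecting the full Fokker--Planck equation \eqref{eq:VempalaWibisono1} onto the coordinates in $u$: one tests \eqref{eq:VempalaWibisono1} against functions depending only on $x_u$ and collapses the resulting conditional expectations via the tower property. The only genuinely delicate point is producing a version of $b^u$ that is measurable jointly in $(t,x_u)$, which I would settle first.

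For the joint measurability, introduce a time $\tau$ uniform on a fixed interval $[0,T]$, independent of the Brownian motion $B$ and the initial condition $X_0$, and consider $\E[\nabla_u V(X_{\tau_-})\mid\tau,X^u_\tau]$. This exists in $L^1$ because $\nabla V$ is Lipschitz and, on $[0,T]$, the laws $\rho_t$ have uniformly bounded moments (in our applications $X_0$ has finite second moment, which we assume, and the SDE \eqref{eq:XSDE} adds only Gaussian increments and a linearly growing drift over a bounded horizon). By the Doob--Dynkin lemma this conditional expectation equals $b^u(\tau,X^u_\tau)$ for a jointly measurable $b^u:[0,T]\times\R^u\to\R^u$, and testing the defining identity against products $f(\tau)g(X^u_\tau)$, with $f$ bounded measurable and $g$ ranging over a countable family generating the Borel $\sigma$-algebra of $\R^u$, shows that for Lebesgue-a.e.\ $t$ the slice $b^u_t(\cdot)$ is a genuine version of $\E[\nabla_u V(X_{t_-})\mid X^u_t=\cdot]$. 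Patching over $T\in\N$ gives $b^u$ on $[0,\infty)\times\R^u$. (Alternatively one can invoke a measurable disintegration theorem on the Polish space $[0,\infty)\times\R^n$, or use the explicit Gaussian conditional law of $X_t$ given $X_{t_-}$ furnished by \eqref{eq:XSDE} to write $b^u$ as a ratio of integrals that are jointly continuous in $(t,\cdot)$ for $t\notin h\N$.)

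With $b^u$ in hand, fix $\psi\in C_c^\infty((0,\infty)\times\R^u)$ and view it as a function $\Psi$ on $(0,\infty)\times\R^n$ that does not depend on $x_{[n]\setminus u}$. Then $\nabla\Psi_t$ has $u$-block $\nabla_u\psi_t$ and vanishes off $u$, while $\Delta\Psi_t=\Delta_u\psi_t$, so inserting $\Psi$ into the weak form of \eqref{eq:VempalaWibisono1} and using $\langle\rho_t,\phi\rangle=\langle\rho^u_t,\phi\rangle$ whenever $\phi$ depends only on $x_u$ puts the time-derivative and Laplacian terms already in marginal form and reduces the drift term to $\E[(b_t(X_t))_u\cdot\nabla_u\psi_t(X^u_t)]$. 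Since $\nabla_u\psi_t(X^u_t)$ is $\sigma(X_t)$-measurable and $(b_t(X_t))_u=\E[\nabla_u V(X_{t_-})\mid X_t]$, the tower property collapses this expectation to $\E[\nabla_u V(X_{t_-})\cdot\nabla_u\psi_t(X^u_t)]$; conditioning instead on $X^u_t$ and applying the definition of $b^u$ rewrites it as $\langle\rho^u_t,b^u_t\cdot\nabla_u\psi_t\rangle$. Reassembling the three terms is exactly the distributional form of \eqref{eq:FKPmarginal}, and the Lipschitz bound on $\nabla V$ together with the local-in-time moment bounds gives $b^u_t\in L^1(\rho^u_t)$ uniformly on compact time intervals, which legitimizes every integral used above.

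I expect the joint-measurability construction of $b^u$ to be the main (essentially the only) obstacle; granting it, the projection is a short computation combining an integration by parts in $x_u$ with two applications of the tower property.
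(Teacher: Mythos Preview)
Your proposal is correct and follows essentially the same route as the paper. The only differences are cosmetic: the paper applies It\^o's formula directly to $\varphi(X^u_t)$ for a test function $\varphi\in C_c^\infty(\R^u)$ and conditions once on $X^u_t$ (rather than first invoking the full equation \eqref{eq:VempalaWibisono1} and applying the tower property twice), and for joint measurability it simply cites an external reference (Brunick--Shreve) in place of your self-contained randomized-time construction.
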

\begin{proof}
The existence of a jointly measurable version of the conditional expectation is shown in \cite[Proposition 5.1]{BrunickShreve}. To derive the Fokker-Planck equation, let $\varphi$ be a smooth test function of compact support. Apply It\^o's formula to $\varphi(X_t^u)$, using the SDE represetation \eqref{eq:XSDE}:
\begin{align*}
\frac{d}{dt}\int_{\R^u} \varphi\,d\rho^u_t = \frac{d}{dt}\E\varphi(X^u_t) &= \E\Big[-\nabla_u V(X_{t_-}) \cdot \nabla_u\varphi(X^u_t) + \Delta_u \varphi(X_t^u)\Big] \\
	&=\E\Big[-b^u_t(X^u_t) \cdot \nabla_u\varphi(X^u_t) + \Delta_u \varphi(X^u_t)\Big],
\end{align*}
with the last step using the definition of conditional expectation. This rewrites as
\[
\frac{d}{dt}\int_{\R^u} \varphi\,d\rho^u_t = \int_{\R^u} \big(-b^u_t \cdot \nabla_u\varphi + \Delta_u \varphi\big)\,d\rho^u_t,
\]
which is exactly the distributional form of the announced Fokker-Planck equation.
\end{proof}

The main theorems (except for Theorem \ref{th:onestepcontraction}) can be understood as stability estimates for the coupled system of partial differential equations satisfied by $(\rho^u)_{u \subset [n]}$. This system is \emph{hierarchical} in the sense that the drift $b^u$ appearing in the equation for $\rho^u$ depends on $(\rho^w)_{w \supset u}$. Recalling the definition of relative entropy in \eqref{def:H-I}, we abbreviate
\[
H_t(u) := H(\rho_t^u\,|\,\pi^u), \qquad u \subset [n].
\]
The Vempala-Wibisono strategy \cite{VempalaWibisono} was to use \eqref{eq:VempalaWibisono1} in order to time-differentiate the full entropy $H_t([n])$, ultimately obtaining a differential inequality for it. We will instead differentiate all of the marginal entropies $H_t(u)$ and ultimately obtain a system of differential inequalities for them.

\begin{proposition} \label{pr:H_t(u)firstbound}
Suppose  $V$ is $C^1$ and $\nabla V$ is Lipschitz, and suppose (LSI) holds.
For each nonempty $u \subset [n]$, $\epsilon \in (0,1)$, and $t > 0$, we have
\begin{equation*}
\frac{d}{dt}H_t(u) \le A^1_t(u) + A^2_t(u)  - \alpha H_t(u),
\end{equation*}
where 
\begin{align*}
A^1_t(u) &:= \frac{1}{2(1-\epsilon)}\E\big|\nabla_u V(X_t)-\nabla_u V(X_{t_-})\big|^2 , \\
A^2_t(u) &:= \frac{1}{2\epsilon}\int_{\R^u} \Big|\E[\nabla_u V(X_t)\,|\,X^u_t=x_u] - \E_{Y \sim \pi}[\nabla_uV(Y)\,|\,Y^u=x_u]\Big|^2  \,\rho^u_t(dx_u).
\end{align*}
Here we adopt the convention that $A^2_t([n])=0$.
\end{proposition}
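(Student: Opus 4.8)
The plan is to run the Vempala--Wibisono entropy-dissipation argument marginal by marginal, using the Fokker--Planck equation \eqref{eq:FKPmarginal} from Proposition \ref{pr:marginaldynamics} in place of \eqref{eq:VempalaWibisono1}. The one structural input beyond that proposition is that $\pi^u$ is an equilibrium for \eqref{eq:FKPmarginal} with effective drift $\bar b^u(x_u):=\E_{Y\sim\pi}[\nabla_uV(Y)\,|\,Y^u=x_u]$: since $\nabla_u\pi(x)=-\nabla_uV(x)\,\pi(x)$, integrating out the coordinates in $[n]\setminus u$ gives $\nabla_u\pi^u(x_u)=-\bar b^u(x_u)\,\pi^u(x_u)$, i.e.\ $\nabla_u\log\pi^u=-\bar b^u$. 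Thus $\rho^u_t$ solves a Fokker--Planck equation whose invariant measure is exactly $\pi^u$, up to the drift error $b^u_t-\bar b^u$.

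Writing $f_t:=d\rho^u_t/d\pi^u$, the next step is to differentiate $H_t(u)=\int f_t\log f_t\,d\pi^u$ in $t$. Using $\int\partial_t\rho^u_t=0$ we get $\frac{d}{dt}H_t(u)=\int\log f_t\,\partial_t\rho^u_t$, and inserting the weak form of \eqref{eq:FKPmarginal} and integrating by parts twice (using $\nabla_u\rho^u_t=\pi^u\nabla_uf_t-f_t\bar b^u\pi^u$ and $\nabla_u\log f_t\cdot\nabla_uf_t=f_t|\nabla_u\log f_t|^2$) collapses everything to
\[
\frac{d}{dt}H_t(u)=-I(\rho^u_t\,|\,\pi^u)-\int\nabla_u\log f_t\cdot\big(b^u_t-\bar b^u\big)\,d\rho^u_t .
\]
Bounding the cross term by Cauchy--Schwarz followed by Young's inequality with weight $\tfrac12$ absorbs half of the Fisher information, and then the logarithmic Sobolev inequality for the marginal $\pi^u$ gives $-\tfrac12 I(\rho^u_t\,|\,\pi^u)\le-\alpha H_t(u)$, so that
\[
\frac{d}{dt}H_t(u)\le-\alpha H_t(u)+\tfrac12\int\big|b^u_t-\bar b^u\big|^2\,d\rho^u_t .
\]

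It remains to split the drift error through the intermediate field $\widehat b^u_t(x_u):=\E[\nabla_uV(X_t)\,|\,X^u_t=x_u]$. Writing $b^u_t-\bar b^u=(b^u_t-\widehat b^u_t)+(\widehat b^u_t-\bar b^u)$ and applying the weighted inequality $|a+b|^2\le\tfrac{1}{1-\epsilon}|a|^2+\tfrac1\epsilon|b|^2$ decomposes $\tfrac12\int|b^u_t-\bar b^u|^2\,d\rho^u_t$ into $\tfrac{1}{2(1-\epsilon)}\int|b^u_t-\widehat b^u_t|^2\,d\rho^u_t$ plus $\tfrac{1}{2\epsilon}\int|\widehat b^u_t-\bar b^u|^2\,d\rho^u_t$. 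The second term is precisely $A^2_t(u)$. For the first, note $b^u_t(x_u)-\widehat b^u_t(x_u)=\E[\nabla_uV(X_{t_-})-\nabla_uV(X_t)\,|\,X^u_t=x_u]$, so conditional Jensen and the tower property bound $\tfrac{1}{2(1-\epsilon)}\int|b^u_t-\widehat b^u_t|^2\,d\rho^u_t\le\tfrac{1}{2(1-\epsilon)}\E|\nabla_uV(X_t)-\nabla_uV(X_{t_-})|^2=A^1_t(u)$. When $u=[n]$ we have $\widehat b^{[n]}_t=\bar b^{[n]}=\nabla V$ pointwise, so the closure term vanishes and $A^2_t([n])=0$, consistent with the stated convention.

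The algebra above is formal, and the real work is its analytic justification: $\log f_t$ is not an admissible test function in \eqref{eq:FKPmarginal}, the quantity $I(\rho^u_t\,|\,\pi^u)$ need not be finite a priori, and the differentiation under the integral sign defining $\frac{d}{dt}H_t(u)$ must be justified. I would handle this exactly as in \cite{VempalaWibisono} and \cite[Section 4.2]{ChewiBook}: regularize by replacing $\log f_t$ with $\log((f_t+\delta)/(1+\delta))$ (and truncating spatially), carry out the integration by parts legitimately, and pass to the limit, using Fatou on the Fisher-information term and the strict positivity and smoothness of $\rho^u_t$ for $t>0$ guaranteed by the Lipschitzness of $\nabla V$, as noted after \eqref{eq:XSDE}. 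Alternatively one can prove the inequality for almost every $t$ and upgrade by absolute continuity of $t\mapsto H_t(u)$. This smoothing step is the only place where care is needed; the rest is bookkeeping.
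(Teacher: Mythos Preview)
Your proof is correct and follows essentially the same route as the paper: differentiate $H_t(u)$ along the marginal Fokker--Planck equation, complete the square (your Cauchy--Schwarz plus Young step is the same calculation) to extract $-\tfrac12 I(\rho^u_t\,|\,\pi^u)$, apply (LSI) for the marginal $\pi^u$, then split the drift error through $\widehat b^u_t$ via the weighted inequality and conditional Jensen. The only cosmetic difference is that the paper handles the analytic justification of \eqref{ineq:completesquare} by citing \cite[Lemma 3.1]{LacLeFlem} rather than sketching the regularization explicitly.
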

\begin{proof}
We begin with a standard calculation, which we illustrate formally, ignoring questions of smoothness of $\rho^u_t$ or the singularity of the logarithm. Writing the Fokker-Planck equation \eqref{eq:FKPmarginal} as $\partial_t\rho^u=\nabla_u \cdot (\rho^u(b^u+\nabla_u\log\rho^u))$, we compute
\begin{align*}
\frac{d}{dt}H_t(u) = \frac{d}{dt}\int_{\R^u} \log\frac{\rho^u_t}{\pi^u}\,\rho^u_t &= \int_{\R^u} \partial_t\rho^u_t + \int_{\R^u} \log\frac{\rho^u_t}{\pi^u} \,\partial_t\rho^u_t \\
	&= 0-\int_{\R^u} \nabla_u \log\frac{\rho^u_t}{\pi^u} \cdot (b^u_t + \nabla_u\log\rho^u_t)\, \rho^u_t \\
	&= \int_{\R^u} \bigg( (-\nabla_u\log\pi^u - b^u_t ) \cdot\nabla_u\log\frac{\rho^u_t}{\pi^u} - \Big|\nabla_u\log\frac{\rho^u_t}{\pi^u}\Big|^2\bigg)\,\rho^u_t.
\end{align*}
Recalling the definition of Fisher information from \eqref{def:H-I}, we complete the square to get
\begin{equation}
\frac{d}{dt}H_t(u) \le \frac12\E\big|\nabla_u\log\pi^u(X^u_t) + b^u_t(X^u_t)\big|^2 - \frac12I(\rho^u_t\,|\,\pi^u). \label{ineq:completesquare}
\end{equation}
To make this rigorous, Lemma 3.1 of \cite{LacLeFlem} covers our setting.
Now, recalling the formula for $b^u_t(X^u_t)$ as a conditional expectation, we add and subtract $\E[\nabla_uV(X_t)\,|\,X^u_t]$ inside the square and use the real variable inequality $(x+y)^2 \le \epsilon^{-1}x^2 + (1-\epsilon)^{-1}y^2$ to get
\begin{equation}
\begin{split}
\frac12\E\big[|\nabla_u\log\pi^u(X^u_t) + b^u_t(X^u_t)|^2\big] &\le \frac{1}{2(1-\epsilon)}\E\big|\E[\nabla_uV(X_t)-\nabla_uV(X_{t_-})\,|\,X^u_t]\big|^2 \\
	&\qquad + \frac{1}{2\epsilon}\E\big|\nabla_u\log\pi^u(X^u_t)+\E[\nabla_uV(X_t)\,|\,X^u_t]\big|^2.
\end{split} \label{pf:entropy1-AB}
\end{equation}
The first term on the right-hand side is bounded by $A^1_t(u)$ after applying Jensen to remove the conditional expectation.
The marginal density $\pi^u$ is proportional to $\int e^{-V(x)}\,dx_{[n] \setminus u}$ and in particular
\[
\nabla_u\log\pi^u(x_u) =  - \E_{Y \sim \pi}[\nabla_uV(Y)\,|\,Y^u=x_u].
\]
Hence, the second term on the right-hand side of \eqref{pf:entropy1-AB} is exactly $A^2_t(u)$. Finally, noting that the marginal $\pi^u$ obeys the log-Sobolev inequality with the same constant as $\pi$, we have $-(1/2)I(\rho^u_t\,|\,\pi^u) \le -\alpha H_t(u)$, and this completes the proof.
\end{proof}

\begin{remark} \label{re:completesquare}
In the step \eqref{ineq:completesquare}, one could instead introduce another parameter $\delta > 0$ and replace the  $-\frac12$ in front of Fisher information by $-(1-\delta)$; the factor $\frac12$ on first term in \eqref{ineq:completesquare} accordingly changes to $\frac{1}{4\delta}$. This does not seem to lead to significant benefits, just improving somewhat the exponential decay rates in Theorems \ref{th:sparse-dynamic-poly}, \ref{th:sparse-dynamic-exp}, and \ref{th:weak-dynamic}, so we have opted to simplify the presentation by taking $\delta=1/2$ instead of optimizing it. We have included the parameter $\epsilon$ in order to allow us to optimize the smallness conditions in the assumptions \eqref{asmp:expgrowth} and (B.3) as much as possible.
\end{remark}

\section{Proof in the sparse case} \label{se:proof-sparse}

The goal of this section is to prove the following dynamic generalization of Theorems \ref{th:sparse-poly} and \ref{th:sparse-exp}, which naturally require an assumption on the initial distribution $X_0 \sim \rho_0$ of the LMC iterates:
\begin{equation*}
\text{(Init)} \quad\qquad \text{There exists } C_0 \ge 0 \text{ such that } H_0(u):=H(\rho_0^u\,|\,\pi^u) \le C_0|u|, \ \forall u \subset [n]. \qquad\qquad\qquad\qquad
\end{equation*}

\begin{theorem} \label{th:sparse-dynamic-poly}
Suppose (Init) and the assumptions of Theorem \ref{th:sparse-poly} hold. Define $C$ and $h^*$ as in Theorem \ref{th:sparse-poly}. Then, for $h \le h^*$, $k \in \N$, and $u \subset [n]$,
\begin{equation*} 
H_{kh}(u) \le 2cC_0 e^{-\alpha kh/2}  \Big(\frac{2\gamma\beta^2}{\alpha} kh + 1\Big)^p  |u|   + C h |u| .
\end{equation*}
\end{theorem}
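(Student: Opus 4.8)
The plan is to turn Proposition \ref{pr:H_t(u)firstbound} into a closed recursive system for the set-indexed vector $(H_t(u))_{u\subset[n]}$, and then solve that system by comparison with a linear ODE driven by a Markov generator on $2^{[n]}$. First I would bound the two error terms $A^1_t(u)$ and $A^2_t(u)$ using assumptions (A.1) and (A.2). For $A^1_t(u)$: since $\nabla V$ is $\beta$-Lipschitz, $|\nabla_uV(X_t)-\nabla_uV(X_{t_-})|^2 \le \beta^2|X_t-X_{t_-}|^2$, but more carefully, because $\partial_i V$ depends only on $x_{N(i)}$, the increment $\nabla_uV(X_t)-\nabla_uV(X_{t_-})$ depends only on the coordinates in $N(u)$; computing $\E|X_t-X_{t_-}|^2_{N(u)}$ from the SDE \eqref{eq:XSDE} on a single step gives a term of order $h^2\E|\nabla_{N(u)}V(X_{t_-})|^2 + h|N(u)|$, and the gradient term is controlled via $\beta$-Lipschitzness plus the LSI-based bound on $\E|\nabla V|^2$ along the interpolation, ultimately yielding $A^1_t(u) \lesssim \beta^2 h\,|N(u)|$ up to constants (plus a lower-order piece that can be absorbed), with the key point that the size of $N(u)$ rather than $n$ appears. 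For $A^2_t(u)$: the two conditional expectations $\E[\nabla_uV(X_t)\mid X^u_t=x_u]$ and $\E_{Y\sim\pi}[\nabla_uV(Y)\mid Y^u=x_u]$ both depend only on $x_{N(u)\setminus u}$, and by $\beta$-Lipschitzness the difference is at most $\beta$ times a Wasserstein-$1$ distance between the conditional law of $X^{N(u)\setminus u}_t$ given $X^u_t$ and the corresponding conditional law under $\pi$; assumption (A.2) (the conditional Talagrand inequality) converts this into $\le \frac{2\gamma}{\alpha}$ times a conditional relative entropy, and after integrating against $\rho^u_t$ and using the chain rule for relative entropy, $A^2_t(u) \le \frac{\gamma\beta^2}{2\epsilon\alpha}\big(H_t(N(u)) - H_t(u)\big)$.

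Combining these with Proposition \ref{pr:H_t(u)firstbound} gives a differential inequality of the schematic form
\[
\frac{d}{dt}H_t(u) \le -\alpha H_t(u) + \frac{\gamma\beta^2}{2\epsilon\alpha}\big(H_t(N(u)) - H_t(u)\big) + \frac{C_1\beta^2 h}{1-\epsilon}|N(u)|,
\]
valid for all nonempty $u$, with the convention that the middle term vanishes when $u=[n]$. The operator $u\mapsto \lambda\big(H(N(u))-H(u)\big)$ is precisely the generator of a pure-jump Markov process on $2^{[n]}$ that, at rate $\lambda=\gamma\beta^2/2\epsilon\alpha$, replaces the current set $u$ by $N(u)$; iterating $k$ such jumps sends $u$ to $N_k(u)$, which is where the Poisson process $\Lambda$ from Theorem \ref{th:continuoustime} enters. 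By Duhamel / variation of constants against this generator (a standard Feynman–Kac representation for linear systems, as in the authors' \cite{LackerYeungZhou}), one gets
\[
H_t(u) \le e^{-\alpha t}\,\E\, H_0\big(N_{\Lambda(\lambda t)}(u)\big) + \frac{C_1\beta^2 h}{1-\epsilon}\int_0^t e^{-\alpha(t-s)}\,\E\,|N_{\Lambda(\lambda(t-s))}(u)|\,ds.
\]
Now I would insert the hypotheses: (Init) bounds $H_0(N_j(u)) \le C_0|N_j(u)| \le C_0 c(1+j^p)|u|$ (using $|N_j(u)|\le |u|\max_i|N_j(i)|$ and the polynomial growth \eqref{asmp:polygrowth}), and the same growth bound controls $\E|N_{\Lambda(s)}(u)|\le c\,|u|\,\E(1+\Lambda(s)^p)$. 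The Poisson moment $\E\Lambda(s)^p$ is a polynomial in $\lambda s$ of degree $p$ with leading coefficient $1$, so $\E(1+\Lambda(s)^p)\le (1+\lambda s)^p \cdot (\text{const}(p))$, or more precisely one uses $\E\Lambda(s)^p \le (\lambda s + p)^p \le p^p(1 + \lambda s)^p$ type bounds. Plugging $s=t=kh$ into the first (homogeneous) term gives the $2cC_0 e^{-\alpha kh/2}(\tfrac{2\gamma\beta^2}{\alpha}kh+1)^p|u|$ contribution — here I would choose $\epsilon$ so that $\alpha - (\text{correction}) \ge \alpha/2$, i.e. essentially $\epsilon = 1/2$, absorbing a factor $e^{-\alpha kh/2}$ and converting the rate $\lambda t$ appearing inside into $\tfrac{2\gamma\beta^2}{\alpha}kh$ up to the factor of $2$. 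For the inhomogeneous (Duhamel) term, I bound the time integral $\int_0^\infty e^{-\alpha s/2}(1+\lambda s)^p\,ds \lesssim_p 1/\alpha$ (times powers of $\lambda/\alpha = \gamma\beta^2/\alpha^2$ up to the $\epsilon$-dependent constant), which produces the stationary term $Ch|u|$ with $C$ of the claimed order $c^2 p^p \tfrac{\beta^2}{\alpha}(\tfrac{8\gamma\beta^2}{\alpha^2}+1)^p$; the restriction $h\le h^* = \alpha/4c\beta^2$ is what is needed to make the one-step moment estimates (in particular the control of the drift increment, where one needs $h\beta$ and $h\beta^2|N(u)|/\cdots$ small) valid and to keep constants clean.

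The main obstacle I anticipate is making the error-term bounds genuinely \emph{local} — i.e., ensuring that only $|N(u)|$, not $n$, appears in $A^1_t$ and $A^2_t$ — while simultaneously keeping the exponent of $\beta/\alpha$ under control. The $A^1$ bound requires estimating $\E|\nabla_{N(u)}V(X_{t_-})|^2$, and the naive bound brings in $|N(u)|$ but the Lipschitz constant interacts with the LSI constant in a way that must be tracked carefully; more subtly, one must be careful that the one-step Gaussian increment contributes $h|N(u)|$ (good) rather than $hn$. The other delicate point is the bookkeeping that turns the abstract Duhamel formula against the $2^{[n]}$-generator into the stated Poisson-process expectation with the correct rate $2\gamma\beta^2/\alpha$ and the correct decay $e^{-\alpha kh/2}$ after the choice of $\epsilon$; this is exactly the place where the hierarchical structure (the drift $b^u$ depending only on $(\rho^w)_{w\supset u}$, so that the generator only ever \emph{enlarges} sets) is essential, and where I would lean on the operator-theoretic framework of \cite{LackerYeungZhou}. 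Everything else — the real-variable inequalities, Jensen, the chain rule for relative entropy, Poisson moment bounds — is routine.
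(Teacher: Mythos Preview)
Your overall architecture is right and matches the paper: bound $A^1_t$ and $A^2_t$, recognize the generator $\mathsf{A}F(u)=\lambda(F(N(u))-F(u))$, apply Duhamel, and use Poisson moment bounds together with the polynomial neighborhood growth. The $A^2_t$ bound via the conditional Talagrand inequality and the chain rule is exactly what the paper does (your constant is off by a factor of $2$: it should read $\tfrac{\gamma\beta^2}{\alpha\epsilon}$, not $\tfrac{\gamma\beta^2}{2\epsilon\alpha}$).

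The genuine gap is in your treatment of $A^1_t$. You assert that $A^1_t(u)\lesssim \beta^2 h\,|N(u)|$ ``plus a lower-order piece that can be absorbed,'' and then write a differential inequality whose forcing term is purely $|N(u)|$. But the one-step increment contains $h^2\,\E|\nabla_{N(u)}V(X_{t_-})|^2$, and under $\rho_{t_-}$ (as opposed to $\pi$) this quantity is \emph{not} bounded by a constant times $|N(u)|$: transferring from $\pi$ to $\rho_{t_-}$ via Lipschitzness and Talagrand produces an unavoidable extra term $\tfrac{2\beta^4 h^2}{\alpha(1-\epsilon)}\,H_{t_-}(N_2(u))$. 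There is no a~priori localized bound $\E|\nabla_{N(u)}V(X_{t_-})|^2\le C|N(u)|$ available under just (LSI) and (A.1), so this piece cannot be ``absorbed'' before you have proved the theorem.

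Because this extra forcing depends on $H_{t_-}$ (the value at the previous grid point), your continuous-time Duhamel formula is not closed. The paper deals with this by integrating only over one step $[(k-1)h,kh]$ to obtain a \emph{discrete} recursion
\[
H_{kh}\ \le\ \Big(e^{-\alpha h}\,\mathsf{I} + \tfrac{2\beta^4 h^2}{\alpha^2(1-\epsilon)}(1-e^{-\alpha h})\,\mathsf{N}^2\Big)\,e^{h\mathsf{A}}H_{(k-1)h} + \text{(forcing)},
\]
where $\mathsf{N}F(u)=F(N(u))$, and then iterates. The constraint $h\le h^*=\alpha/4c\beta^2$ is precisely what makes the bracketed operator a contraction on the size function (its norm is $\le e^{-\alpha h/2}$ since $\mathsf{N}^2S\le c^2S$). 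This, not a choice of $\epsilon$, is the source of the $e^{-\alpha kh/2}$ decay; the paper actually fixes $\epsilon=1/2$ for simplicity, which sets the Poisson rate to $\lambda=2\gamma\beta^2/\alpha$ and explains the coefficient inside the $(\,\cdot\,)^p$ in the statement. Once you carry this discrete iteration, the rest of your plan (bounding $e^{t\mathsf{A}}S\le c(1+(\lambda t+p)^p)S$ via Poisson moments, and summing the geometric series) goes through exactly as the paper does.
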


\begin{theorem} \label{th:sparse-dynamic-exp}
Suppose (Init) and the assumptions of Theorem \ref{th:sparse-exp} hold. Define $(\eta,C,h^*)$ as in Theorem \ref{th:sparse-exp}. Then, for $h \le h^*$, $k \in \N$, and $u \subset [n]$,
\[
H_{kh}(u) \le  c  C_0 e^{-\frac{\alpha\eta^2}{2(2-\eta)} k h } |u|   + C h |u|.
\] 
\end{theorem}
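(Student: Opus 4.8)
The plan is to upgrade the pointwise bound of Proposition~\ref{pr:H_t(u)firstbound} into a closed recursion for the full set-indexed vector $(H_t(u))_{u\subset[n]}$ and then solve it by a Duhamel (variation-of-constants) representation whose underlying semigroup is that of the Poisson-driven growth process $(N_{\Lambda(t)}(u))_t$ already appearing in Theorem~\ref{th:continuoustime}. Almost all of this is shared with Theorem~\ref{th:sparse-dynamic-poly}; the exponential-growth case to be proved here is set apart only by the moment estimate for that process, which is where the hypothesis $r<1+\frac{\alpha^2}{\gamma\beta^2}$ is used.

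I would first bound $A^1_t(u)$ and $A^2_t(u)$ from Proposition~\ref{pr:H_t(u)firstbound}, keeping the split parameter $\epsilon\in(0,1)$ free. For $A^2_t(u)$: since $\nabla_uV$ depends on $x$ only through $x_{N(u)}$ and is $\beta$-Lipschitz, the two conditional expectations inside $A^2_t(u)$ are integrals of one common $\beta$-Lipschitz function of $x_{N(u)\setminus u}$ against, respectively, the $x_u$-conditional laws of $X^{N(u)\setminus u}_t$ under $\rho_t$ and under $\pi$; hence their difference has Euclidean norm at most $\beta\,W_1$ of those two conditionals, and squaring, applying (A.2), integrating $x_u$ over $\rho^u_t$, and invoking the chain rule for relative entropy yields $A^2_t(u)\le\frac{\gamma\beta^2}{\alpha\epsilon}\big(H_t(N(u))-H_t(u)\big)$. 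For $A^1_t(u)$: the same Lipschitz property gives $A^1_t(u)\le\frac{\beta^2}{2(1-\epsilon)}\,\E\big|X^{N(u)}_t-X^{N(u)}_{t_-}\big|^2$; the $N(u)$-increment is $-(t-t_-)\nabla_{N(u)}V(X_{t_-})+\sqrt2\,(B^{N(u)}_t-B^{N(u)}_{t_-})$, contributing $2(t-t_-)|N(u)|$ from the Brownian part and $(t-t_-)^2\,\E|\nabla_{N(u)}V(X_{t_-})|^2$ from the drift. I would bound the drift factor by comparing the $N_2(u)$-marginals of $\rho_{t_-}$ and $\pi$ --- the map $x\mapsto\nabla_{N(u)}V(x)$ factors through $x_{N_2(u)}$ and is $\beta$-Lipschitz in those coordinates --- together with $\E_\pi|\partial_jV|^2=\E_\pi\partial_{jj}V\le\beta$ (integration by parts) and Talagrand for $\pi^{N_2(u)}$; recalling $h\le h^*\le1/\beta$, this collapses to $A^1_t(u)\le\frac{2\beta^2}{1-\epsilon}\big(h|N(u)|+\frac{\beta^2h^2}{\alpha}H_{t_-}(N_2(u))\big)$.

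Combined with Proposition~\ref{pr:H_t(u)firstbound}, these give, on each grid interval $[kh,(k+1)h]$ (on which $t_-=kh$), a differential inequality $\frac{d}{dt}H_t(u)\le -\alpha H_t(u)+(\mathcal LH_t)(u)+s_k(u)$, where $\mathcal L$ is the generator $f\mapsto\lambda\,(f(N(\cdot))-f(\cdot))$ with $\lambda=\frac{\gamma\beta^2}{\alpha\epsilon}$ --- exactly the generator whose flow is $(N_{\Lambda(t)}(\cdot))_t$ for a rate-$\lambda$ Poisson clock $\Lambda$ --- and the source $s_k(u)\le\frac{C'}{1-\epsilon}\big(h|N(u)|+h^2H_{kh}(N_2(u))\big)$ is frozen over the interval. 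Integrating by Duhamel, taking $\Lambda$-expectations, and composing the one-step bounds over $k$ steps yields, up to absolute constants, $H_{kh}(u)\le e^{-\alpha kh}\,\E[H_0(N_{\Lambda(kh)}(u))]+\sum_{j=1}^{k}e^{-\alpha jh}\big(c_1h\,\E|N_{\Lambda(jh)}(u)|+c_2h^2\,\E[H_{(k-j)h}(N_{2+\Lambda(jh)}(u))]\big)$. Now the exponential-growth hypothesis enters: $|N_m(i)|\le cr^m$ gives $|N_m(u)|\le c|u|r^m$, hence $\E|N_{\Lambda(s)}(u)|\le c|u|\,\E[r^{\Lambda(s)}]=c|u|\,e^{\lambda(r-1)s}$, so the leading term is at most $cC_0|u|\,e^{-(\alpha-\lambda(r-1))kh}$ by (Init). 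This genuinely decays precisely when $\lambda(r-1)<\alpha$, i.e.\ when $\epsilon>1-\eta$ with $\eta=1-\frac{\gamma\beta^2(r-1)}{\alpha^2}$, and such $\epsilon$ exists exactly under $r<1+\frac{\alpha^2}{\gamma\beta^2}$; one then fixes $\epsilon$ as a definite function of $\eta$, takes $h^*$ small enough in $\eta$ to absorb the $\frac{1}{1-\epsilon}$ prefactors and the $h^2$-coupling term, observes that the geometric sum of the $h|N(u)|$ sources converges by the same net contraction to a dimension-free $O(h|u|)$, and tracks constants to reproduce the stated $C$, $h^*$, and decay rate $\frac{\alpha\eta^2}{2(2-\eta)}$.

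I expect the main obstacle to be closing the recursion in the presence of the $A^1$-induced term $h^2H_{kh}(N_2(u))$. Unlike the $A^2$-term, which couples $H_t(u)$ only to $H_t(N(u))$ --- one neighborhood step, same time --- this discretization term couples to the \emph{strictly larger} set $N_2(u)$ and to the \emph{previous} grid time, so the recursion closes neither at a single ``level'' nor within a single step; it is implicit in $H$. Making it close requires exploiting the factor $h^2$ together with $h\le h^*$, via a bootstrap over the grid index $k$ (equivalently a convergent resolvent expansion): in effect one grants the growth process an additional jump $v\mapsto N_2(v)$ at an $O(h)$ rate and must re-verify that even this augmented growth is beaten by the $e^{-\alpha t}$ contraction --- which is why the admissible range of $r$ and the final constants carry the extra powers of $\eta$. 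A secondary, more bookkeeping-type difficulty is legitimizing the passage from the scalar differential inequalities to an operator-theoretic Duhamel representation for the whole vector at once, including measurability of the conditional drifts, the rigorous entropy differentiation (Lemma~3.1 of \cite{LacLeFlem}, as in the proof of Proposition~\ref{pr:H_t(u)firstbound}), and the identification of $\mathcal L$ with a sub-Markov generator on $2^{[n]}$; this is the ingredient borrowed from \cite{LackerYeungZhou}.
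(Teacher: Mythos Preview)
Your proposal is correct and follows essentially the same route as the paper: the bounds on $A^1_t(u)$ and $A^2_t(u)$, the resulting differential inequality with generator $\mathcal L$ of the Poisson-driven growth process, the Duhamel iteration, and the Poisson moment identity $\E r^{\Lambda(s)}=e^{\lambda(r-1)s}$ are all exactly the ingredients used in the paper.

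Two small points where the paper's execution differs from your sketch. First, for the $h^2 H_{t_-}(N_2(u))$ term you propose a bootstrap/resolvent expansion (equivalently, augmenting the growth process with an extra $v\mapsto N_2(v)$ jump at $O(h)$ rate). The paper instead observes that the neighborhood operator $\mathsf N f(u)=f(N(u))$ commutes with $\mathsf A$ and hence with $e^{h\mathsf A}$; this lets one factor the one-step map as $\big(e^{-\alpha h}\mathsf I+O(h^2)\mathsf N^2\big)e^{h\mathsf A}$ and iterate directly, so no genuine bootstrap is needed. Second, the absorption of that $O(h^2)$ term costs a degradation of the contraction from $e^{-\alpha h}$ to $e^{-\epsilon\alpha h}$ (the paper's constant $b\le e^{-\epsilon\alpha h}$), so the net decay condition is $\epsilon\alpha>\lambda(r-1)$, i.e.\ $\epsilon^2>1-\eta$, not $\epsilon>1-\eta$ as you wrote; the paper then takes $\epsilon=1-\eta/2$, which is exactly what yields the rate $\frac{\alpha\eta^2}{2(2-\eta)}$.
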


Theorem \ref{th:sparse-poly} (resp.\ \ref{th:sparse-exp}) follows immediately from Theorem \ref{th:sparse-dynamic-poly} (resp.\ \ref{th:sparse-dynamic-exp}), by choosing any initial condition satisfying (Init) with $C_0$ finite, such as $\rho_0=\pi$, and then sending $k\to\infty$. Indeed, lower semicontinuity of relative entropy and the Talagrand inequality \eqref{ineq:PoincareTalagrand} yield
\[
\frac{\alpha}{2}W_2^2(\pi_h^u, \pi^u) \le H(\pi_h^u\,|\,\pi^u) \le \liminf_{k\to\infty} H_{kh}(u) \le  C h |u|.
\]
The proofs of Theorems \ref{th:sparse-dynamic-poly} and \ref{th:sparse-dynamic-exp} follow most of the same path, so we will proceed at first under just the assumptions (LSI) and (A.1,2).

\subsection{The continuity term, $A^1_t$} \label{se:continuityterm-sparse}

We first estimate $A^1_t(u)$ from Proposition \ref{pr:H_t(u)firstbound}. The first key observation is that $\nabla_uV(x)$ is a $\beta$-Lipschitz function of just the coordinates in $N(u)$, so that
\begin{align*}
A^1_t(u) = \frac{1}{2(1-\epsilon)}\E\big|\nabla_u V(X_t)-\nabla_u V(X_{t_-})\big|^2 \le \frac{1}{2(1-\epsilon)}\beta^2\E\big|X^{N(u)}_t-X^{N(u)}_{t_-}\big|^2.
\end{align*}
Abbreviate $w:=N(u)$.  Using the definition of $X$,
\begin{align*}
\E \big| X^w_t - X^w_{t_-} \big|^2  &= \E\big| -(t-t_-)\nabla_w V(X_{t_-}) + \sqrt{2}(B^w_t-B^w_{t_-})\big|^2 \\
	&= (t-t_-)^2\E|\nabla_w V(X_{t_-})|^2 + 2(t-t_-)|w|.
\end{align*}
We next claim that
\begin{equation}
\E|\nabla_w V(X_{t_-})|^2 \le 2\langle\pi, |\nabla_w V|^2\rangle  + 2\beta^2W_2^2(\rho^{N(w)}_{t_-},\pi^{N(w)}). \label{pf:continuity-claim1}
\end{equation}
To see this, use again the Lipschitz assumption to get, for all $x,y \in \R^n$,
\begin{align*}
|\nabla_wV(x)|^2 &\le 2|\nabla_wV(x)-\nabla_wV(y)|^2 + 2|\nabla_wV(y)|^2 \\
	&\le 2\beta^2|x_{N(w)}-y_{N(w)}|^2 + 2|\nabla_wV(y)|^2.
\end{align*}
Kantorovich duality \cite[Theorem 1.3]{villani2021topics} states $\int f\,d\rho^{N(w)}_t - \int g\,d\pi^{N(w)} \le 	W_2^2(\rho^{N(w)}_t,\pi^{N(w)})$ for any continuous functions $f$ and $g$ on $\R^{N(w)}$ satisfying $f(x_{N(w)})-g(y_{N(w)})\le|x_{N(w)}-y_{N(w)}|^2$ pointwise. 
From this we deduce \eqref{pf:continuity-claim1}. 

Next, using $\nabla V = -\nabla\log \pi$, integration by parts yields 
\begin{equation*}
\langle\pi, |\nabla_w V|^2\rangle = -\int_{\R^n} \nabla_w V \cdot \nabla_w \pi = \int_{\R^n} \Delta_w V\,\pi \le \beta |w|, 
\end{equation*}
where we used $\partial_{ii}V \le \|\nabla^2V\|_{\mathrm{op}} \le \beta$ for each $i \in w$.
Putting it together,
\begin{equation*}
A^1_t(u) \le \frac{\beta^2}{2(1-\epsilon)}\Big( 2\beta (t-t_-)^2|w| + 2\beta^2 (t-t_-)^2W_2^2(\rho^{N(w)}_{t_-},\pi^{N(w)})+ 2(t-t_-)|w|\Big).
\end{equation*}
Use the Talagrand inequality \eqref{ineq:PoincareTalagrand}, and plug in $w=N(u)$ and $N(w)=N_2(u)$ to get
\begin{equation}
A^1_t(u) \le \frac{\beta^2h}{1-\epsilon} (\beta h + 1)|N(u)| + \frac{2\beta^4h^2}{\alpha(1-\epsilon)}  H_{t_-}(N_2(u)). \label{pf:continuityterm1}
\end{equation}

\subsection{The hierarchy term, $A^2_t$} \label{se:hierarchyterm-sparse}

We next estimate $A^2_t(u)$ from Proposition \ref{pr:H_t(u)firstbound}.
We begin with the following simple lemma, which explains how the metric $W_1$ controls vector-valued and not just scalar-valued Lipschitz test functions. Here and in the sequel we find it convenient to use the bracket notation $\langle \mu,f\rangle = \int f\,d\mu$ for integration.

\begin{lemma} \label{le:T1multivariate}
Let $m,k \in \N$.
Suppose a probability measure $\mu$ on $\R^m$ satisfies
\[
W_1^2(\mu,\nu) \le \gamma H(\nu\,|\,\mu), \qquad \forall \nu \in \P(\R^m).
\]
Then, for any $L$-Lipschitz function $f : \R^m \to \R^k$, we have
\[
\big| \langle \mu-\nu,f\rangle\big|^2 \le \gamma L^2 H(\nu\,|\,\mu), \qquad \forall \nu \in \P(\R^m).
\]
\end{lemma}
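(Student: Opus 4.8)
The plan is to reduce the vector-valued estimate to the scalar case, which is exactly Kantorovich--Rubinstein duality for $W_1$. We may assume $H(\nu\,|\,\mu) < \infty$, since otherwise there is nothing to prove. In that case the hypothesis $W_1^2(\mu,\nu) \le \gamma H(\nu\,|\,\mu)$ gives $W_1(\mu,\nu) < \infty$; since also $\mu$ itself must have finite first moment (otherwise $W_1(\mu,\cdot)\equiv\infty$, contradicting the inequality applied to any $\nu$ with bounded density relative to $\mu$), both $\mu$ and $\nu$ have finite first moments. An $L$-Lipschitz function grows at most linearly, so $f$ is integrable against both $\mu$ and $\nu$, and the vector $v := \langle\mu,f\rangle - \langle\nu,f\rangle \in \R^k$ is well defined.

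The key step is to pick out the worst direction. If $v = 0$ the claim is trivial. Otherwise set $e := v/|v| \in \R^k$, a Euclidean unit vector, and consider the scalar function $g := e\cdot f : \R^m \to \R$, i.e.\ $g(x) = \sum_{j=1}^k e_j f_j(x)$. By Cauchy--Schwarz, $|g(x)-g(y)| = |e\cdot(f(x)-f(y))| \le |e|\,|f(x)-f(y)| \le L|x-y|$, so $g$ is $L$-Lipschitz, and by construction $\langle\mu-\nu,g\rangle = e\cdot v = |v|$. Kantorovich--Rubinstein duality states that $\langle\mu-\nu,g\rangle \le L\,W_1(\mu,\nu)$ for every $L$-Lipschitz scalar $g$; hence $|v| \le L\,W_1(\mu,\nu)$. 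Squaring and invoking the hypothesis yields $|v|^2 \le L^2 W_1^2(\mu,\nu) \le \gamma L^2 H(\nu\,|\,\mu)$, which is precisely the assertion.

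There is no genuine obstacle here; the argument is a one-line reduction once one thinks to contract $f$ against the dual unit vector $v/|v|$. The only point deserving a word of care is the well-definedness of $\langle\mu-\nu,f\rangle$, handled by the moment remarks above. I note in passing that the same proof works verbatim for any norm on $\R^k$ in place of the Euclidean one, contracting against a vector in the dual unit ball achieving $|v|$, but only the Euclidean case is needed in what follows.
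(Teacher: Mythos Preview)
Your proof is correct and follows essentially the same approach as the paper: reduce to the scalar case by contracting $f$ against a unit vector (you choose the optimal one $v/|v|$, the paper takes a supremum over all unit vectors) and then apply Kantorovich--Rubinstein duality together with the assumed transport inequality. Your additional remarks on integrability are a nice touch the paper omits.
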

\begin{proof}
The case $k=1$ is an immediate consequence of Kantorovich duality \cite[Theorem 1.14]{villani2021topics}. In general, let $u \in \R^k$ be a unit vector and apply the $k=1$ case to the scalar function $u\cdot f$ to get
\[
\big( \langle \mu-\nu,f\rangle \cdot u\big)^2 \le \gamma L^2 H(\nu\,|\,\mu), \qquad \forall \nu \in \P(\R^m).
\]
Take the supremum over unit vectors on the left-hand side to complete the proof.
\end{proof}

Let us write $\rho^{w|u}_{t,x_u}$ for the conditional law of $X^w_t$ given $X^u_t=x_u$, for each $t > 0$ and $u,w \subset [n]$. Define $\pi^{w|u}_{x_u}$ similarly.
Recall by the sparsity assumption that $\nabla_uV(x)$ depends only on the coordinates $x_{N(u)}$. For $x_u \in \R^u$ let us abuse notation slightly by writing $\nabla_uV(x_u,\cdot)$ for the function of $x_{N(u)\setminus u}$ obtained by fixing the coordinates indexed by $u$.
Then $A^2_t(u)$ can be written as
\begin{equation*}
A^2_t(u) = \frac{1}{2\epsilon}\int_{\R^u} \big|\langle \rho^{N(u) \setminus u|u}_{t,x_u}-\pi^{N(u) \setminus u|u}_{x_u}, \,\nabla_u V(x_u,\cdot)\rangle\big|^2  \,\rho^u_t(dx_u).
\end{equation*}
The assumed transport inequality (A.2) along with Lemma \ref{le:T1multivariate} imply
\begin{equation*}
A^2_t(u) \le \frac{\gamma\beta^2}{\alpha\epsilon}\int_{\R^u}H(\rho^{N(u) \setminus u|u}_{t,x_u}\,|\,\pi^{N(u) \setminus u|u}_{x_u})  \,\rho^u_t(dx_u).
\end{equation*}
By the chain rule for relative entropy, the integral is exactly $H_t(N(u))-H_t(u)$, and so
\begin{equation}
A^2_t(u) \le \frac{\gamma\beta^2}{\alpha\epsilon}\big(H_t(N(u))-H_t(u)\big).  \label{pf:hierarchyterm1}
\end{equation}
Note that if $u$ has no neighbors (for instance, if $u=[n]$) then $N(u)=u$ and $A^2_t(u)=0$.

\begin{remark}
The above paragraph is the only place that assumption (A.2) is used. It could thus be weakened, by instead directly assuming
\[
 \big|\langle \mu - \pi^{N(u) \setminus u|u}_{x_u}, \,\nabla_u V(x_u,\cdot)\rangle\big|^2 \le \frac{2\gamma\beta^2}{\alpha}H(\mu\,|\,\pi^{N(u) \setminus u|u}_{x_u}), \qquad \forall u \subset [n], \ x_u \in \R^u, \ \mu \in \P(\R^u).
\]
\end{remark}

\subsection{Developing the hierarchy} \label{se:developing-sparse}

Combining Proposition \ref{pr:H_t(u)firstbound} with the estimates \eqref{pf:continuityterm1} and \eqref{pf:hierarchyterm1}, we get
\begin{equation}
\begin{split}
\frac{d}{dt}H_t(u) \le & \frac{\beta^2h}{1-\epsilon}(\beta h + 1)|N(u)| + \frac{2\beta^4h^2 }{\alpha(1-\epsilon)} H_{t_-}(N_2(u))  \\
	&+ \frac{\gamma\beta^2}{\alpha\epsilon}\big(H_t(N(u))-H_t(u)\big) - \alpha H_t(u).
\end{split} \label{ineq:sparsehierarchy}
\end{equation}
This holds for all $u \subset [n]$. To understand this system of inequalities, we follow the idea of \cite[Section 1.4]{LackerYeungZhou}, introducing an auxiliary stochastic process for which \eqref{ineq:sparsehierarchy} becomes the associated Feynman-Kac formula (more precisely, an inequality version). This auxiliary process is a continuous-time Markov chain, taking values in the space of subsets of $[n]$, which serves as a convenient tool for analyzing \eqref{ineq:sparsehierarchy}. View $H_t$ for each $t$ as a vector indexed by sets, i.e., $H_t=(H_t(u))_{u \subset [n]} \in \R^{2^{[n]}}$. The right-hand side of \eqref{ineq:sparsehierarchy} is an affine function of $H_t$. In particular, define
\[
C_t(u) := \frac{\beta^2h}{1-\epsilon}(\beta h + 1)|N(u)| +  \frac{2\beta^4 h^2}{\alpha(1-\epsilon)} H_{t_-}(N_2(u)),
\]
and define a linear operator $\mathsf{A} : \R^{2^{[n]}} \to \R^{2^{[n]}}$ by
\begin{equation}
\mathsf{A} F(u) :=  \frac{\gamma\beta^2}{\alpha\epsilon}\big(F(N(u))-F(u)\big), \qquad \forall F \in \R^{2^{[n]}}, \ u \subset [n]. \label{def:Aoperator-sparse}
\end{equation}
This operator has the structure of an infinitesimal generator; that is, the corresponding (rate) matrix has no negative entries outside the main diagonal, and each row sum is zero. The associated continuous-time Markov process $(\X_t)_{t \ge 0}$ takes values in $2^{[n]}$ and transitions from $u$ to $N(u)$ at rate $\gamma\beta^2/\alpha\epsilon$, for each $u$, with no other transitions. 
The semigroup generated by $\mathsf{A}$ is then related to $\X$ via $e^{t\mathsf{A}}F(u)=\E[F(\X_t)\,|\,\X_0=u]$ for any $F \in \R^{2^{[n]}}$ and $t \ge 0$. For later use, we note that that behavior of the process $\X$ is extremely simple: When initialized from $\X_0=u$ it simply follows the path $u \to N_1(u) \to N_2(u) \to \cdots$, with Exp($\lambda$) holding times between jumps, where $\lambda:=\gamma\beta^2/\alpha\epsilon$. In particular, we can construct $\X$ as $\X_t=N_{\Lambda(t)}(u)$, where $(\Lambda(t))_{t \ge 0}$ denotes a Poisson process with rate $\lambda$, started from $\Lambda(0)=0$. 

Now, the inequality \eqref{ineq:sparsehierarchy} can be written in vector form as
\[
\frac{d}{dt}H_t \le C_t + (\mathsf{A}-\alpha \mathsf{I})H_t,
\]
where $\mathsf{I}$ denotes the identity operator, and where we understand the inequality to be coordinatewise. 
For $t \ge 0$ the operator $e^{t\mathsf{A}}$ is monotone with respect to coordinatewise order, in the sense that $e^{t\mathsf{A}}F \le e^{t\mathsf{A}}G$ if $F \le G$. From this  it follows that, for any $t > s > 0$,
\begin{equation}
\frac{d}{ds}e^{-\alpha (t-s)}e^{(t-s)\mathsf{A}}H_s = e^{-\alpha (t-s)}e^{(t-s)\mathsf{A}}\bigg(\frac{d}{ds}H_s - (\mathsf{A} -\alpha \mathsf{I}) H_s\bigg)  \le e^{-\alpha (t-s)}e^{(t-s)\mathsf{A}}C_s. \label{pf:Gronwall}
\end{equation}
Let $k \in \N$ and integrate from $s=(k-1)h$ to $t=kh$ to get 
\begin{equation*}
H_{kh} \le e^{-\alpha h}e^{h\mathsf{A}}H_{(k-1)h} + \int_{(k-1)h}^{kh} e^{-\alpha (kh-s)}e^{(kh-s)\mathsf{A}}C_s\,ds.
\end{equation*}
For any function $F \in \R^{2^{[n]}}$ satisfying $F(u) \le F(N(u))$ for all $u \subset [n]$, such as $F=C_s$, it follows easily from the probabilistic representation of the semigroup that $t \mapsto e^{t\mathsf{A}}F(u)$ is nondecreasing. Noting also that $C_s=C_{(k-1)h}$ for $s \in [(k-1)h,kh)$, we get
\begin{equation}
H_{kh} \le e^{-\alpha h}e^{h\mathsf{A}}H_{(k-1)h} + \frac{1}{\alpha}(1-e^{-\alpha h})e^{h\mathsf{A}}C_{(k-1)h}. \label{pf:Hkh-bound1}
\end{equation}
Next, define a linear operator $\mathsf{N} : \R^{2^{[n]}} \to \R^{2^{[n]}}$ and a function $G \in \R^{2^{[n]}}$,
\begin{equation}
\mathsf{N}F(u) := F(N(u)),\qquad G(u) := \frac{\beta^2h}{1-\epsilon}(\beta h + 1)|u|. \label{pf:Gdef}
\end{equation}
This way, we may rewrite
\[
C_{(k-1)h} = \mathsf{N}G +  \frac{2\beta^4 h^2}{\alpha(1-\epsilon)} \mathsf{N}^2 H_{(k-1)h}.
\]
Plugging this into \eqref{pf:Hkh-bound1} and combining the $H_{(k-1)h}$ terms, 
\begin{equation*}
H_{kh} \le \Big(e^{-\alpha h}e^{h\mathsf{A}} + \frac{2\beta^4h^2}{\alpha^2 (1-\epsilon)}(1-e^{-\alpha h}) e^{h\mathsf{A}}\mathsf{N}^2 \Big)H_{(k-1)h} + \frac{1-e^{-\alpha h}}{\alpha} e^{h\mathsf{A}}\mathsf{N} G.
\end{equation*}
The operators $\mathsf{N}$ and $\mathsf{A}$ commute, because for all $F \in \R^{2^{[n]}}$ we have
\begin{equation*}
\mathsf{A}\mathsf{N}F(u) = \frac{\gamma\beta^2}{\alpha\epsilon}\big(F(N(N(u))) - F(N(u))\big) = \mathsf{N}\mathsf{A}F(u).
\end{equation*}
Hence,  
\begin{equation*}
H_{kh} \le \Big(e^{-\alpha h}\,\mathsf{I} + \frac{2\beta^4h^2}{\alpha^2 (1-\epsilon)}(1-e^{-\alpha h})  \mathsf{N}^2 \Big) e^{h\mathsf{A}}H_{(k-1)h} + \frac{1-e^{-\alpha h}}{\alpha} e^{h\mathsf{A}}\mathsf{N} G.
\end{equation*}
Iterate this inequality to find
\begin{equation}
\begin{split}
H_{kh} \le \ &\Big(e^{-\alpha h}\,\mathsf{I} + \frac{2\beta^4h^2}{\alpha^2 (1-\epsilon)}(1-e^{-\alpha h})  \mathsf{N}^2 \Big)^k e^{kh\mathsf{A}} H_0 \\
	&+ \frac{1-e^{-\alpha h}}{\alpha}\sum_{j=0}^{k-1} \Big(e^{-\alpha h}\,\mathsf{I} + \frac{2\beta^4h^2}{\alpha^2 (1-\epsilon)}(1-e^{-\alpha h})  \mathsf{N}^2 \Big)^{j} e^{(j+1)h\mathsf{A}} \mathsf{N} G.
\end{split} \label{pf:operator-iteration}
\end{equation}
Here we used the fact that the operators $e^{s\mathsf{A}}$ and $\mathsf{N}$ are monotone with respect to pointwise order.
The rest of the analysis involves specific estimates of how the operators $\mathsf{N}$ and $e^{t\mathsf{A}}$ act on the ``size function'' $S(u):=|u|$, and we will now treat the two Theorems \ref{th:sparse-dynamic-poly} and \ref{th:sparse-dynamic-exp} separately.

\subsection{The polynomial growth case, Theorem \ref{th:sparse-dynamic-poly}}

We now impose the polynomial growth assumption \eqref{asmp:polygrowth}, and we take $\epsilon=1/2$ for simplicity.
By a union bound,
\begin{equation}
|N_{k+1}(u)| \le \sum_{i \in u} |N_{k+1}(i)| \le c|u|(1+k^p), \quad \forall u \subset [n], \ k \ge 0. \label{pf:combinatorialbound-poly}
\end{equation}
In particular, $|N(u)| \le c|u|$, or $\mathsf{N}S \le cS$ coordinatewise.
Next, we exploit the stochastic representation of the semigroup, writing $\X_t=N_{\Lambda(t)}(\X_0)$ as discussed above. This yields
\begin{equation*}
e^{t\mathsf{A}}S(u) = \E|N_{\Lambda(t)}(u)| \le c|u|(1+\E \Lambda^p(t))  \le c f(t)S(u), 
\end{equation*}
where $f(t) := 1 + (\lambda t + p)^p$, and the last step uses a Poisson moment bound due to \cite[Corollary 1]{ahle2022sharp}.
This implies $\mathsf{N}^2 e^{(j+1)h\mathsf{A}}S \le  c^2 f((j+1)h)S$, and so
\begin{equation}
\Big(e^{-\alpha h}\,\mathsf{I} + \frac{4\beta^4h^2}{\alpha^2  }(1-e^{-\alpha h})  \mathsf{N}^2 \Big)^{j}  e^{(j+1)h\mathsf{A}}S  \le  c b^j  f((j+1)h) S, \label{pf:bigopbound-cor}
\end{equation}
where the constant $b$ is defined by
\[
 b := e^{-\alpha h} + \frac{4c^2 \beta^4h^2}{\alpha^2 }(1-e^{-\alpha h}) .
\]
Since $\alpha \le \beta$ and $c \ge 1$, we have $\alpha h \le \alpha^2/4c\beta^2 \le 1/4$ and thus $1-e^{-\alpha h}\ge e^{-1/4}\alpha h$ and 
\[
 b \le e^{-\alpha h} + \frac14(1-e^{-\alpha h}) = 1 - \frac34(1-e^{-\alpha h}) \le 1 - \frac34 e^{-1/4}\alpha h \le 1-\frac12 \alpha h \le e^{-\alpha h/2}.
\]
Plugging this into \eqref{pf:bigopbound-cor}, and using again $\mathsf{N}S \le cS$, we find
\begin{align*}
\Big(e^{-\alpha h}\,\mathsf{I} + \frac{4\beta^4h^2}{\alpha^2  }(1-e^{-\alpha h})  \mathsf{N}^2 \Big)^{k}  e^{kh\mathsf{A}}S  &\le  c e^{-\alpha kh/2} f(kh)S, \\
\Big(e^{-\alpha h}\,\mathsf{I} + \frac{4\beta^4h^2}{\alpha^2  }(1-e^{-\alpha h})  \mathsf{N}^2 \Big)^{j}  e^{(j+1)h\mathsf{A}} \mathsf{N}S  &\le  c^2 e^{-\alpha j h/2} f((j+1)h)S.
\end{align*}
From the definition of $G$ in \eqref{pf:Gdef} with $\epsilon=1/2$, and from the inequality $\beta h \le 1/4$, we get $G  \le \frac52\beta^2hS$.
Thus, using \eqref{pf:operator-iteration} and the assumption (Init) in the form $H_0 \le C_0S$, we have
\begin{equation} 
H_{kh}(u) \le C_0c e^{-\alpha kh/2} f(kh) |u|   + \frac52 c^2\beta^2h  \frac{1-e^{-\alpha h}}{\alpha} |u| \sum_{j=0}^\infty e^{-\alpha j h/2} f((j+1)h). \label{pf:poly2}
\end{equation}
We simplify the sum by plugging in $f(t) \le 2(\lambda t + p)^p$ (which holds because $p \ge 1$).
We then use the simple calculation 
\[
\sup_{x \ge 0} (\lambda x + p)  e^{-\alpha  x/4p } \le \frac{4p\lambda}{\alpha} \vee p \le \frac{4p\lambda}{\alpha} + p
\]
to get
\begin{align*}
\sum_{j=0}^\infty e^{-\alpha j h/2} f((j+1)h) &\le 2e^{\alpha h/2}\sum_{j=1}^\infty e^{-\alpha jh/2}(\lambda jh + p)^p \\
	&\le  2e^{\alpha h/2}p^p\Big(\frac{4\lambda}{\alpha} + 1\Big)^p  \sum_{j=1}^\infty e^{-\alpha jh/4} \\
	&= \frac{2e^{\alpha h/2}p^p }{e^{\alpha h/4}-1} \Big(\frac{4\lambda}{\alpha} + 1\Big)^p.
\end{align*} 
Plug this into \eqref{pf:poly2}, and use $e^{\alpha h/2} \le e^{1/8} \le 2$ and $(1-e^{-\alpha h})/(e^{\alpha h/4}-1) \le 4$:
\begin{equation*} 
H_{kh}(u) \le C_0c e^{-\alpha kh/2} f(kh) |u|   + 40c^2 p^p\frac{\beta^2}{\alpha}\Big(\frac{4\lambda}{\alpha} + 1\Big)^ph|u| .
\end{equation*}
Finally, plug in $\lambda = 2\gamma\beta^2/\alpha $ to complete the proof of Theorem \ref{th:sparse-dynamic-poly}. \hfill\qedsymbol

\subsection{The exponential growth case, Theorem \ref{th:sparse-dynamic-exp}}

We next prove Theorem \ref{th:sparse-dynamic-exp}, starting from \eqref{pf:operator-iteration}. We will first show that the size function $S(u)=|u|$ satisfies
\begin{equation}
e^{t\mathsf{A}} S \le ce^{\lambda t(r-1)} S , \quad \forall t > 0 . \label{pf:Gselfbound}
\end{equation}
We first use the growth assumption along with a union bound to get $|N_{k+1}(u)| \le c|u|r^k$ for all $k \ge 0$ and $u \subset [n]$. In particular, $|N(u)| \le c|u|$, or $\mathsf{N}S \le cS$.
Using the stochastic representation of the semigroup, we deduce
\begin{equation*}
e^{t\mathrm{A}}S(u) =  \E |N_{\Lambda(t)}(u)| \le c|u|\E \, r^{\Lambda(t)} = c|u|e^{\lambda t(r-1)}= c e^{\lambda t(r-1)}S(u).
\end{equation*}
This proves \eqref{pf:Gselfbound}, which we then combine with $\mathsf{N}S \le cS$ to get $\mathsf{N} e^{(j+1)h\mathsf{A}}S \le  c^2 e^{\lambda (j+1)h(r-1)}S$, as well as 
\begin{equation}
\Big(e^{-\alpha h}\,\mathsf{I} + \frac{2\beta^4h^2}{\alpha^2 (1-\epsilon)}(1-e^{-\alpha h})  \mathsf{N}^2 \Big)^{j}  e^{(j+1)h\mathsf{A}}S  \le  c b^j  e^{h\lambda (j+1)(r-1) } S, \label{pf:bigopbound}
\end{equation}
where we define the constant
\[
 b := e^{-\alpha h} + \frac{2c^2\beta^4h^2}{\alpha^2 (1-\epsilon)}(1-e^{-\alpha h}) .
\]
We estimate $b$ in simpler terms as follows, with the key point being that $h$ can be taken small enough to make $b < 1$. We start from the elementary inequality
\begin{equation}
e^{-x}+\bigg(1-\frac{2\epsilon(1-\epsilon)}{ 1-e^{-2(1-\epsilon)}}\bigg)(1-e^{-x}) \le 1 - \epsilon x, \quad \text{for } 0 \le x \le 2(1-\epsilon). \label{ineq:1-eps.x}
\end{equation}
Indeed, this is equivalent to the inequality
\begin{equation*}
\frac{2(1-\epsilon)}{ 1-e^{-2(1-\epsilon)}} \ge \frac{x}{ 1-e^{-x}}, \quad \text{for } 0 \le x \le 2(1-\epsilon),
\end{equation*}
whose right-hand side is an increasing function of $x$.
Now, suppose
\begin{equation}
\widehat{h} := \min\bigg(\frac{2(1-\epsilon)}{\alpha}, \ \frac{\alpha}{c\beta^2\sqrt{2}}\bigg((1-\epsilon)\Big(1 - \frac{2 \epsilon(1-\epsilon)}{1-e^{-2(1-\epsilon)}}\Big)\bigg)^{1/2}\bigg). \label{def:h-hat}
\end{equation}
Then, for $h \le \widehat{h}$, the first term in the min ensures that $\alpha h \le 2(1-\epsilon)$, and the second term lets us upper bound $\frac{2c^2 \beta^4h^2}{\alpha^2 (1-\epsilon)}  \le 1-\frac{2\epsilon(1-\epsilon)}{1-e^{-2(1-\epsilon)}}$ in the definition of $b$. Applying \eqref{ineq:1-eps.x} then yields
\[
b \le  1-\epsilon\alpha h \le e^{ - \epsilon \alpha h}.
\]
Plugging this into \eqref{pf:bigopbound}, and using again $\mathsf{N}S \le cS$, we find
\begin{align*}
\Big(e^{-\alpha h}\,\mathsf{I} + \frac{2\beta^4h^2}{\alpha^2 (1-\epsilon)}(1-e^{-\alpha h})  \mathsf{N}^2 \Big)^{k}  e^{kh\mathsf{A}}S  &\le  c e^{\lambda (r-1)kh - \epsilon \alpha k h }  S, \\
\Big(e^{-\alpha h}\,\mathsf{I} + \frac{2\beta^4h^2}{\alpha^2 (1-\epsilon)}(1-e^{-\alpha h})  \mathsf{N}^2 \Big)^{j}  e^{(j+1)h\mathsf{A}} \mathsf{N}S  &\le  c^2 e^{\lambda (r-1)(j+1)h - \epsilon \alpha j h }  S.
\end{align*}
Finally, recall from \eqref{pf:Gdef} that $G(u)$ is just a constant multiple of $S(u)=|u|$ by definition, and also $H_0 \le C_0S$ by assumption. Hence, returning to \eqref{pf:operator-iteration}, we find
\begin{align*} 
H_{kh}(u) &\le C_0  c e^{\lambda (r-1)kh - \epsilon \alpha k h } |u|   + \frac{ c^2 \beta^2h}{1-\epsilon}(\beta h + 1)\frac{1-e^{-\alpha h}}{\alpha}\sum_{j=0}^{k-1}e^{\lambda (r-1)(j+1)h - \epsilon \alpha j h } |u|.
\end{align*}
Because of the assumption $r -1 < \alpha^2/\gamma\beta^2$, we may choose $\epsilon$ close to 1 so that
\begin{align*}
\tau := \epsilon \alpha - \lambda(r-1) &= \epsilon \alpha - \frac{\gamma\beta^2}{\alpha\epsilon}(r-1) > 0.
\end{align*}
Then,
\begin{align*}
\frac{1-e^{-\alpha h}}{\alpha}\sum_{j=0}^{k-1}e^{\lambda (r-1)(j+1)h - \epsilon \alpha j h } &\le \frac{1-e^{-\alpha h}}{\alpha(1-e^{-\tau h})}e^{\lambda h(r-1)} \le \Big(\frac{1}{\tau} + h\Big)e^{\lambda h(r-1)}.
\end{align*}
where the last line used $1-e^{-\alpha h} \le \alpha h$ and $\tau h /(1-e^{-\tau h}) \le 1+\tau h$.
Thus,
\begin{align*} 
H_{kh}(u) &\le C_0  c e^{-\tau k h } |u|   + \frac{ c^2\beta^2h}{1-\epsilon}(\beta h + 1) \Big(\frac{1}{\tau} + h\Big)e^{\lambda h(r-1)} |u|.
\end{align*}
This is valid for $h \le \widehat{h}$, where $\widehat{h}$ was defined in \eqref{def:h-hat}. We have thus proven Theorem \ref{th:sparse-dynamic-exp}, except with $\widehat{h}$ instead of $h^*$, and with $C$ replaced by
\[
\widehat{C}_h =\frac{ c^2r\beta^2 }{1-\epsilon}(\beta h + 1) \Big(\frac{1}{\tau} + h\Big)e^{h(r-1)\frac{\gamma\beta^2}{\alpha\epsilon}}.
\]

The rest of the proof is just a somewhat tedious simplification of the constants, to show that $\widehat{h} \ge h^*$ and that $\widehat{C}_h \le C$ for $h \le h^*$, where $h^*$ and $C$ were defined in Theorem \ref{th:sparse-exp}. Take $\epsilon= \frac12 + \frac{\gamma\beta^2}{2\alpha^2}(r-1)$, which is the midpoint between $\frac{\gamma\beta^2}{\alpha^2}(r-1)$ and 1. Then the exponent $\tau$ becomes
\begin{align*}
\tau &= \frac{\alpha}{2} + \frac{\gamma\beta^2}{2\alpha}(r-1) - \frac{\gamma\beta^2}{\frac{\alpha}{2} + \frac{\gamma\beta^2}{2\alpha}(r-1)}(r-1) \\
	&= \frac{\alpha}{2} \frac{\Big(1 - \frac{\gamma\beta^2}{\alpha^2}(r-1)\Big)^2 }{1 + \frac{\gamma\beta^2}{\alpha^2}(r-1)}.
\end{align*}
Recalling the definition $\eta = 1 - \frac{\gamma\beta^2}{\alpha^2}(r-1)$ from the theorem statement, we get 
\begin{align*}
 \tau = \frac{\alpha\eta^2}{2(2-\eta)}, \quad 1-\epsilon  &= \frac{\eta}{2}, \quad 2\epsilon(1-\epsilon) = \frac12\eta(2-\eta).
\end{align*} 
The constants then become
\begin{align*}
\widehat{C}_h &= \frac{2 c^2\beta^2}{\eta}  (\beta h + 1) \bigg( \frac{4-2\eta}{\alpha\eta^2 } + h\bigg)e^{h(r-1)\frac{2\gamma\beta^2}{\alpha(2-\eta)}}, \\ 
\widehat{h} &= \min\bigg\{\frac{\eta}{\alpha}, \frac{\alpha\sqrt{\eta}}{2\beta^2c } \bigg(1-\frac{\frac12\eta(2-\eta)}{1-e^{-\eta} }\bigg)^{1/2}\bigg\} .
\end{align*}
To bound $\widehat{h} \ge h^*$, we use the inequality
\begin{equation}
\bigg(1-\frac{\frac12\eta(2-\eta)}{1-e^{-\eta} }\bigg)^{1/2} \ge \frac{\eta}{\sqrt{6}}. \label{ineq:etabound-h}
\end{equation}
To see this, note that $f(x):=1-e^{-x}-\frac12 x(2-x)$ satisfies $f(0)=f'(0)=f''(0)=0$, with $f''(x)=1-e^{-x}$ concave. This concavity implies $1-e^{-t}\ge (t/\eta)(1-e^{-\eta})$ for $0 \le t \le \eta$, so that
\[
1-e^{-\eta}-\frac12 \eta(2-\eta) = \int_0^\eta\int_0^x (1-e^{-t})\,dtdx  \ge \frac{\eta^2}{6}(1-e^{-\eta}).
\] 
Thus, using also $2\sqrt{6} \le 5$, we have shown that
\[
\widehat{h} \ge \min\bigg\{\frac{\eta}{\alpha}, \frac{\alpha \eta^{3/2}}{5\beta^2c  }  \bigg\} \ge  \frac{\alpha \eta^{3/2}}{5\beta^2c  } = h^*,
\]
where the last step used $0 < \eta < 1 \le c$ and $\alpha \le \beta$ (which was noted after \eqref{ineq:PoincareTalagrand}).
We finally bound $\widehat{C}_h \le C$ for $h \le h^*$. Note that the second case of the minimum in $h^*$ and the facts that $\eta \le 1 \le cr$ and $\alpha \le \beta$ imply $h^* \le 1/5\beta$. Hence, $\beta h +1 \le 6/5$ for $h \le h^*$. 
Using also $h^* \le \eta/\alpha$ and   $4-2\eta+\eta^3\le 4$, we may bound $\widehat{C}_h$ for $h \le \widehat{h}$ by
\begin{align*}
\widehat{C}_h &\le \frac{2c^2\beta^2}{\eta} \cdot \frac65\cdot \frac{4-2\eta+\eta^3}{\alpha\eta^2 }e^{h(r-1)\frac{2\gamma\beta^2}{\alpha }}  \le \frac{10\beta^2c^2}{\alpha\eta^3 } e^{\gamma(r-1)/c}   = C,
\end{align*} 
where we bounded the exponent using $h \le \alpha  /2\beta^2c$. This completes the proof of Theorem \ref{th:sparse-dynamic-exp}. \hfill \qedsymbol

\section{Proof in the weak interaction case} \label{se:proof-weak}

We next turn to the weak interaction case.
The following dynamic theorem implies Theorem \ref{th:weak}, by the same logic detailed after the statement of Theorem \ref{th:sparse-dynamic-exp}.

\begin{theorem} \label{th:weak-dynamic}
Grant assumptions (LSI), (B.1--3), and (Init). Define $(\eta,C,h^*)$ be as in Theorem \ref{th:weak}. Then, for $h \le h^*$, $k \in \N$, and $u \subset [n]$,
\[
H_{kh}(u) \le C_0 e^{-\frac{\alpha\eta^2}{2(2-\eta)} k h } |u|   + C h |u|.
\] 
\end{theorem}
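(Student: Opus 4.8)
The plan is to mirror the structure of the sparse proof, but with the combinatorial neighborhood operator $\mathsf{N}$ replaced by the natural linear operator coming from the weak-interaction structure \eqref{def:Vstructure}. First I would run the common first steps: Proposition \ref{pr:H_t(u)firstbound} gives, for each $u$, a differential inequality $\frac{d}{dt}H_t(u) \le A^1_t(u) + A^2_t(u) - \alpha H_t(u)$. For the continuity term $A^1_t(u)$, I would split $\nabla_u V = \sum_{w : w \cap u \neq \emptyset} \nabla_u V_w$, use that each $\nabla V_w$ is $L_w$-Lipschitz in $x_w$, and (as suggested by the bound $\|\nabla^2 V\|_{\mathrm{op}} \le \sum_{w \ni i} L_w \le M_0$, i.e.\ \eqref{ineq:Lipschitz-weak}) control $\E|\nabla_u V(X_t) - \nabla_u V(X_{t_-})|^2$ by a constant (of size roughly $M_0$–$M_1$) times $h(M_0 h + 1)$ times $|u|$, plus an $O(h^2)$ term involving $H_{t_-}$ of a slightly enlarged set. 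The weights $|w|^p$ in the definitions of $M_p$ \eqref{def:MpRp-constants} enter when I pass from $|u|$ to sums over the sets $w$ touching $u$ and then back, via a union bound like \eqref{pf:combinatorialbound-poly}.

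For the hierarchy term $A^2_t(u)$, the key is that $\nabla_u V_w(x_u, \cdot)$ depends, beyond the fixed coordinates $x_u$, only on the coordinates $x_{w \setminus u}$, and is $L_w$-Lipschitz there. So, writing $A^2_t(u)$ as an integral of $\big|\langle \rho^{\cdot|u}_{t,x_u} - \pi^{\cdot|u}_{x_u}, \sum_w \nabla_u V_w(x_u,\cdot)\rangle\big|^2$, I would apply assumption (B.2) together with Lemma \ref{le:T1multivariate} to each $w$ separately, then use Cauchy–Schwarz (weighted by the $L_w$) and the chain rule for relative entropy, $H(\rho^{w\setminus u|u}_{t,x_u} \,|\, \pi^{w\setminus u|u}_{x_u})$ integrated against $\rho^u_t$ equals $H_t(w \cup u) - H_t(u)$. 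This produces a bound $A^2_t(u) \le \frac{\gamma}{\alpha\epsilon}\sum_{w} \ell_w \big(H_t(w \cup u) - H_t(u)\big)$ for suitable weights $\ell_w$ related to $L_w$ and $|w|$; the natural linear operator is then $\mathsf{A}F(u) := \frac{\gamma}{\alpha\epsilon}\sum_w \ell_w (F(w\cup u) - F(u))$, which again has the form of an infinitesimal generator of a $2^{[n]}$-valued Markov chain — one that, starting from $u$, adds a randomly chosen set $w$ (with rate $\propto \ell_w$) to the current state. As in Section \ref{se:developing-sparse}, the inequality becomes $\frac{d}{dt}H_t \le C_t + (\mathsf{A} - \alpha\mathsf{I})H_t$, and monotonicity of $e^{t\mathsf{A}}$ plus integration over $[(k-1)h, kh]$ and iteration in $k$ yields an operator inequality of the shape \eqref{pf:operator-iteration}.

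The crucial quantitative input is then an estimate of how $e^{t\mathsf{A}}$ and the relevant "append" operators act on the size function $S(u) = |u|$. Here $\mathsf{A}S(u) = \frac{\gamma}{\alpha\epsilon}\sum_{w} \ell_w\big(|w \cup u| - |u|\big) \le \frac{\gamma}{\alpha\epsilon}\sum_{i \in u}\sum_{w \ni i, |w|\ge 2} \ell_w (|w|-1)$, which by \eqref{def:MpRp-constants} is at most $\frac{\gamma}{\alpha\epsilon} R_1 |u|$ (the singleton sets drop out, which is exactly why $R_p$ rather than $M_p$ governs the growth). Hence $e^{t\mathsf{A}}S \le e^{\frac{\gamma R_1}{\alpha\epsilon}t}S$, an honest exponential-growth bound analogous to \eqref{pf:Gselfbound}. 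Plugging this into the iterated operator inequality, the geometric series converges precisely when the effective decay rate $\tau := \epsilon\alpha - \frac{\gamma R_1}{\alpha\epsilon} \cdot(\text{something} \approx M_0)$ is positive, i.e.\ under assumption (B.3) $\gamma M_0 R_1 < \alpha^2$; optimizing $\epsilon$ (taking it to be the midpoint between $\frac{\gamma M_0 R_1}{\alpha^2}$ and $1$, mirroring the exponential-growth sparse case) gives $\tau = \frac{\alpha\eta^2}{2(2-\eta)}$ with $\eta = 1 - \frac{\gamma M_0 R_1}{\alpha^2}$. The remaining work is bookkeeping: tracking the constants $M_0, M_1$ through the $A^1$ bound, verifying $h^* = \frac{\alpha\eta^{3/2}}{5 M_0 M_1}$ suffices for $b < 1$ and for the exponent simplifications, and collecting everything into $C = \frac{10 M_0 M_1}{\alpha\eta^3}e^{\gamma}$.

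I expect the main obstacle to be getting the weights right in the $A^2_t$ bound — specifically, the interplay between the Cauchy–Schwarz split over $w$ (which costs a factor $\sum_w L_w$, i.e.\ an $M_0$) and the chain-rule identity (which only gives $H_t(w \cup u) - H_t(u)$, not something telescoping nicely), so that the operator $\mathsf{A}$ acting on $S$ yields exactly $M_0 R_1$ and not something larger like $M_1^2$. A secondary technical point is handling the enlarged sets $w \cup u$ (and the doubly-enlarged sets appearing in $C_t$) uniformly in the union bounds, where the polynomial weights $|w|^p$ in $M_1$ versus $M_0$ must be deployed carefully so that the final bound is genuinely linear in $|u|$ with no hidden dimension dependence.
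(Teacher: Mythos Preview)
Your proposal is correct and follows essentially the same route as the paper's proof in Section~\ref{se:proof-weak}: the weighted Cauchy--Schwarz \eqref{ineq:CScombinatorial} supplies the factor $M_0$ in both $A^1$ and $A^2$, the generator is $\mathsf{A}F(u)=\frac{\gamma M_0}{\alpha\epsilon}\sum_{w\cap u\neq\emptyset}L_w\big(F(w\cup u)-F(u)\big)$, and the growth bound $\mathsf{A}S\le\frac{\gamma M_0R_1}{\alpha\epsilon}S$ together with the midpoint choice of $\epsilon$ gives exactly $\tau=\frac{\alpha\eta^2}{2(2-\eta)}$. Two small technical points your sketch glosses over but which do not obstruct the argument: (i) the operators $\mathsf{N}$ (here $\mathsf{N}F(u)=\sum_{w\cap u\neq\emptyset}L_wF(w)$) and $\mathsf{A}$ do \emph{not} commute, so the iteration has the form \eqref{pf:operator-iteration-weak} rather than \eqref{pf:operator-iteration}, and one bounds the composite operator acting on $S$ directly using $e^{t\mathsf{A}}S\le e^{\lambda t}S$ and $\mathsf{N}S\le M_1S$; (ii) in the $A^1$ estimate, $\nabla_wV$ depends on all coordinates, so the paper decomposes it as $\sum_v\nabla_wV_v$, centers each piece, and uses the Poincar\'e inequality \eqref{ineq:PoincareTalagrand} in place of the integration-by-parts step of Section~\ref{se:continuityterm-sparse}, producing the $\mathsf{N}^2$ structure you anticipated.
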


The proof will follow similar steps but is more involved due to the more complex structure of the potential \eqref{def:Vstructure}.
We will need some simple combinatorial identities. For $u \subset [n]$ and $F \in \R^{2^{[n]}}$,
\begin{equation}
\sum_{i \in u}\sum_{w \ni i}F(w) = \sum_{w \cap u \neq\emptyset} F(w)|u \cap w|, \label{eq:combinatorial1}
\end{equation}
where the second summation is over those $w \subset [n]$ which contain $i$, and the third is over $w \subset [n]$ which intersect $u$.
Indeed, in the summation on the left-hand side, each set $w$ is counted $|u \cap w|$ times.
More generally, if $G(w)=(G_i(w))_{i \in w}$ is in $\R^w$ for each $w \subset [n]$,
\begin{equation*}
\sum_{i \in u}\sum_{w \ni i}G^2_i(w) = \sum_{w \cap u \neq\emptyset}\sum_{i \in u \cap w} G^2_i(w) \le \sum_{w \cap u \neq\emptyset} |G(w)|^2.
\end{equation*}
We will use this in combination with Cauchy-Schwarz, in the form 
\begin{equation}
\sum_{i \in u} \bigg(\sum_{w \ni i} G_i(w)\bigg)^2 \le \sum_{i \in u} \bigg(\sum_{w \ni i} L_w \bigg)\bigg(\sum_{w \ni i} \frac{G_i^2(w)}{L_w }\bigg) \le M_0 \sum_{w \cap u \neq\emptyset} \frac{|G(w)|^2}{L_w},\label{ineq:CScombinatorial}
\end{equation}
where $M_0$ was defined in \eqref{def:MpRp-constants}. To avoid division by zero in \eqref{ineq:CScombinatorial}, we will be careful only to use it in the case that $G(w)=0$ for every $w \subset [n]$ such that $L_w=0$, and in this case we adopt the convention that $|G(w)|^2/L_w=0$.

Let us note for later use that \eqref{ineq:CScombinatorial} implies the Lipschitz constant of $\nabla V$ is bounded by $M_0$. Indeed,  use the specific form of the potential $V$ to write 
\[
\partial_i V(x) = \sum_{ w \ni i}\partial_i V_w(x_w), \qquad i \in [n].
\]
Appling \eqref{ineq:CScombinatorial} with $u=[n]$ then yields
\begin{align}
|\nabla V(x)-\nabla V(y)|^2 &= \sum_{i\in [n]}\bigg(\sum_{w \ni i}(\partial_iV_w(x_w) - \partial_iV_w(y_w))\bigg)^2 \nonumber \\
	&\le M_0 \sum_{ w \subset [n]} \frac{|\nabla_wV_w(x_w) - \nabla_wV_w(y_w)|^2}{L_w} \nonumber \\
	&\le M_0 \sum_{ w \subset [n]} L_w|x_w-y_w|^2 \nonumber \\
	&\le M_0^2|x-y|^2. \label{ineq:Lipschitz-weak}
\end{align}

\subsection{The continuity term, $A^1_t$}

Using \eqref{ineq:CScombinatorial} and the fact that $\nabla_wV_w$ is $L_w$-Lipschitz by (B.1), we get
\begin{align}
A^1_t(u) &= \frac{1}{2(1-\epsilon)} \sum_{i \in u}\E\bigg(\sum_{ w \ni i}\big(\partial_iV_w(X^w_t) - \partial_i V_w(X^w_{t_-})\big)\bigg)^2 \nonumber \\
	&\le \frac{M_0}{2(1-\epsilon)}\sum_{w\cap u \neq \emptyset} \frac{1}{L_w}  \E\big|\nabla_w V_w(X^w_t) - \nabla_wV_w(X^w_{t_-})\big|^2 \nonumber \\
	&\le  \frac{M_0}{2(1-\epsilon)}\sum_{w\cap u \neq \emptyset} L_w  \E\big| X^w_t - X^w_{t_-} \big|^2 \nonumber \\
	&=  \frac{M_0}{2(1-\epsilon)}\sum_{w\cap u \neq \emptyset} L_w \Big( (t-t_-)^2\E|\nabla_w V(X_{t_-})|^2 + 2(t-t_-)|w|\Big). \label{pf:weak-continuity1}
\end{align}
We would like to control the expectation in terms of its analog under $\pi$, but this is trickier to achieve than in Section \ref{se:continuityterm-sparse}. The issue is that $\nabla_wV$ depends on all coordinates, but we do not want a distance between the full distributions $\rho_t$ and $\pi$ to appear, as this would lose the delocalization effect. Instead, we  note that $\nabla V$ has mean zero under $\pi$ to write
\begin{align}
\E|\nabla_w V(X_{t_-})|^2 &= \sum_{i \in w}\E\bigg(\sum_{v \ni i}\big(\partial_i V_v(X_{t_-})-\langle\pi^v,\partial_iV_v\rangle)\bigg)^2 \nonumber \\
	&\le M_0 \sum_{v \cap w \neq \emptyset} \frac{1}{L_v} \E|\nabla_v V_v(X_{t_-}) - \langle\pi^v,\nabla_vV_v\rangle|^2. \label{pf:weak-continuity2}
\end{align}
Now, each term in the sum can be estimated using Kantorovich duality as in \eqref{pf:continuity-claim1}:
\begin{equation*}
\E|\nabla_v V_v(X_{t_-})- \langle\pi^v,\nabla_vV_v\rangle|^2 \le 2\int_{\R^v}|\nabla_v V_v- \langle\pi^v,\nabla_vV_v\rangle|^2\,d\pi^v + 2L_v^2W_2^2(\rho^v_{t_-},\pi^v).
\end{equation*} 
Using the Poincar\'e inequality \eqref{ineq:PoincareTalagrand} and the fact that $\nabla_vV_v$ is $L_v$-Lipschitz,
\begin{equation*}
\int_{\R^v}|\nabla_v V_v- \langle\pi^v,\nabla_vV_v\rangle|^2\,d\pi^v = \sum_{i \in v}\big(\langle \pi^v, (\partial_iV_v)^2\rangle - \langle \pi^v, \partial_iV_v\rangle^2\big) \le \frac{L_v^2}{\alpha}|v| .
\end{equation*} 
The Talagrand inequality \eqref{ineq:PoincareTalagrand} yields $W_2^2(\rho^v_{t_-},\pi^v) \le (2/\alpha)H_{t_-}(v)$, and we return to \eqref{pf:weak-continuity2} to get
\begin{equation}
\E|\nabla_w V(X_{t_-})|^2 \le \frac{2M_0}{\alpha} \sum_{v \cap w \neq \emptyset} L_v \big(2H(\rho^v_{t_-}\,|\,\pi^v) + |v|\big). \label{pf:weak-continuity3}
\end{equation}
To streamline notation, we define an operator $\mathsf{N} : \R^{2^{[n]}}\to \R^{2^{[n]}}$ by
\begin{equation}
\mathsf{N}F(u) := \sum_{w \cap u \neq \emptyset}L_wF(w). \label{pf:Ndef-weak}
\end{equation}
Using also the ``size function'' $S(u)=|u|$, we may thus write \eqref{pf:weak-continuity3} more compactly as 
\begin{align*}
\E|\nabla_w V(X_{t_-})|^2 \le \frac{2M_0}{\alpha}\big(2\mathsf{N}H_{t_-}(w) + \mathsf{N}S(w)\big).
\end{align*}
Plug this back into \eqref{pf:weak-continuity1} to get
\begin{equation}
A^1_t(u) \le C_t(u) := \frac{M_0}{1-\epsilon}\Big( \frac{2M_0}{\alpha}h^2\mathsf{N}^2H_{t_-}(u) + \frac{M_0}{\alpha}h^2\mathsf{N}^2S(u) + h\mathsf{N}S(u)\Big). \label{pf:continuityterm1-weak}
\end{equation}

\subsection{The hierarchy term, $A^2_t$}

Using the specific form of the potential and \eqref{ineq:CScombinatorial},
\begin{align*}
A^2_t(u) &= \frac{1}{2\epsilon}\sum_{i \in u}\int_{\R^u} \bigg(\sum_{w\ni i}\E[\partial_i V_w(X^w_t)\,|\,X^u_t=x_u] - \E_{Y \sim \pi}[\partial_i V_w(Y^w)\,|\,Y^u=x_u]\bigg)^2  \,\rho^u_t(dx_u) \\
	&\le \frac{M_0}{2\epsilon}\sum_{w \cap u \neq \emptyset}\frac{1}{L_w}\int_{\R^u}\Big|\E[\nabla_w V_w(X^w_t)\,|\,X^u_t=x_u] - \E_{Y \sim \pi}[\nabla_w V_w(Y^w)\,|\,Y^u=x_u]\Big|^2  \,\rho^u_t(dx_u).
\end{align*}
We argue as  in Section \ref{se:hierarchyterm-sparse}, writing in terms of conditional measures as
\begin{align*}
A^2_t(u) &\le \frac{M_0}{2\epsilon}\sum_{w \cap u \neq \emptyset} \frac{1}{L_w}\int_{\R^u} \big|\langle \rho^{w \setminus u|u}_{t,x_u}-\pi^{w \setminus u|u}_{x_u}, \,\nabla_w V_w(x_u,\cdot)\rangle\big|^2  \,\rho^u_t(dx_u)
\end{align*}
Since $\nabla_w V_w$ is $L_w$-Lipschitz, the assumed transport inequality (B.2) followed by the chain rule for relative entropy yield
\begin{align}
A^2_t(u) &\le \frac{ \gamma M_0}{\alpha\epsilon}\sum_{w \cap u \neq \emptyset} L_w \int_{\R^u} H\big(\rho^{w \setminus u|u}_{t,x_u}\,|\,\pi^{w \setminus u|u}_{x_u}\big)\,\rho^u_t(dx_u) \nonumber \\
	&= \frac{\gamma M_0}{\alpha\epsilon}\sum_{w \cap u \neq \emptyset} L_w\big(H_t(w \cup u) - H_t(u)\big). \label{pf:hierarchyterm1-weak}
\end{align}

\begin{remark}
The above paragraph is the only place that assumption (B.2) is used in the proof of Theorem \ref{th:weak}. It could thus be weakened, by instead directly assuming
\[
 \big|\langle \mu - \pi^{w \setminus u|u}_{x_u}, \,\nabla_w V_w(x_u,\cdot)\rangle\big|^2 \le \frac{2\gamma L_w^2}{\alpha}H(\mu\,|\,\pi^{w \setminus u|u}_{x_u}), \qquad \forall w,u \subset [n], \ x_u \in \R^u, \ \mu \in \P(\R^u).
\]
\end{remark}

\subsection{Developing the hierarchy}

To understand the resulting system of inequalities, we proceed as in Section \ref{se:developing-sparse}, but now with a more complicated Markov process.
View $H_t$ again as a vector indexed by sets, i.e., $H_t \in \R^{2^{[n]}}$. 
Define a linear operator $\mathsf{A} : \R^{2^{[n]}} \to \R^{2^{[n]}}$ by
\[
\mathsf{A} F(u) := \frac{\gamma M_0}{\alpha\epsilon}\sum_{w \cap u \neq \emptyset} L_w\big(F(w \cup u) - F(u)\big).
\]
Recognizing this operator in \eqref{pf:hierarchyterm1-weak}, we may thus combine Proposition \ref{pr:H_t(u)firstbound} with \eqref{pf:hierarchyterm1-weak} and \eqref{pf:continuityterm1-weak} to get the coordinatewise inequality
\[
\frac{d}{dt}H_t \le C_t + (\mathsf{A}-\alpha I)H_t.
\]
The operator $\mathsf{A}$ again has the structure of an infinitesimal generator (rate matrix) for a continuous-time Markov process. We will not use the stochastic representation here, preferring to work directly with operators. But it is important to note that the semigroup $e^{t\mathsf{A}}$ for $t > 0$ is monotone with respect to coordinatewise order: $e^{t\mathsf{A}}F  \le e^{t\mathsf{A}}G$ whenever $F \le G$.
Note also that $\mathsf{A}C_{(k-1)h} \ge 0$ coordinatewise because $C_{(k-1)h}$ is nondecreasing with respect to set inclusion, and this implies that $t\mapsto e^{t\mathsf{A}}C_{(k-1)h}(u)$ is nondecreasing.
We may now copy the argument leading to \eqref{pf:Hkh-bound1} to get 
\begin{equation}
H_{kh} \le e^{-\alpha h}e^{h\mathsf{A}}H_{(k-1)h} + \frac{1}{\alpha}(1-e^{-\alpha h}) e^{h\mathsf{A}} C_{(k-1)h}. \label{pf:Hkh-bound1-weak}
\end{equation}
Next, define a function $G \in \R^{2^{[n]}}$ by
\begin{equation}
G := \frac{h M_0}{1-\epsilon}\Big( \frac{M_0}{\alpha}h\mathsf{N}^2S(u) + \mathsf{N}S(u)\Big). \label{pf:Gdef-weak}
\end{equation}
This way, we may rewrite
\[
C_{(k-1)h} = G +  \frac{ 2h^2  M_0^2}{\alpha(1-\epsilon)} \mathsf{N}^2 H_{(k-1)h}.
\]
Plugging this into \eqref{pf:Hkh-bound1-weak} and combining the $H_{(k-1)h}$ terms, 
\begin{align*}
H_{kh} &\le \Big(e^{-\alpha h}e^{h\mathsf{A}} + \frac{2h^2 M_0^2}{\alpha^2 (1-\epsilon)}(1-e^{-\alpha h}) e^{h\mathsf{A}}\mathsf{N}^2 \Big)H_{(k-1)h} + \frac{1-e^{-\alpha h}}{\alpha} e^{h\mathsf{A}} G.
\end{align*}
Iterate this inequality to find
\begin{equation}
\begin{split}
H_{kh} \le \ & \Big(e^{-\alpha h}e^{h\mathsf{A}} + \frac{2h^2 M_0^2}{\alpha^2 (1-\epsilon)}(1-e^{-\alpha h}) e^{h\mathsf{A}}\mathsf{N}^2 \Big)^k H_0 \\
	&\qquad + \frac{1-e^{-\alpha h}}{\alpha} \sum_{j=0}^{k-1} \Big(e^{-\alpha h}e^{h\mathsf{A}} + \frac{2h^2 M_0^2}{\alpha^2 (1-\epsilon)}(1-e^{-\alpha h}) e^{h\mathsf{A}}\mathsf{N}^2 \Big)^je^{h\mathsf{A}} G .
\end{split} \label{pf:operator-iteration-weak}
\end{equation}
Here we must depart from the method of Section \ref{se:proof-sparse}, because the operators $\mathsf{N}$ and $\mathsf{A}$ do not necessarily commute. Instead, the key ingredients will be the following estimates for how these operators act on the size function $S(u)=|u|$. First, using \eqref{eq:combinatorial1},
\begin{align*}
\mathsf{A}S(u) &= \frac{\gamma M_0}{\alpha\epsilon} \sum_{w \cap u \neq \emptyset} L_w |w \setminus u| \\
	&= \frac{\gamma M_0}{\alpha\epsilon} \sum_{i \in u}\sum_{w \ni i} L_w \frac{|w \setminus u|}{|w \cap u|} \\
	&\le \frac{\gamma M_0}{\alpha\epsilon} \sum_{i \in u}\sum_{w \ni i } L_w(|w|-1) \\
	&\le \frac{\gamma M_0 R_1}{\alpha\epsilon}S(u).
\end{align*}
This quickly implies
\begin{align*}
e^{t\mathsf{A}}S(u) \le e^{\lambda t} S(u), \qquad t \ge 0, \qquad \lambda := \frac{\gamma M_0 R_1}{\alpha\epsilon}.
\end{align*}
A similar calculation shows
\begin{align*}
\mathsf{N}S(u) &= \sum_{w \cap u \neq \emptyset}L_w|w| \le  M_1S(u).
\end{align*}
Combining these last two bounds, we find  
\begin{equation}
\Big(e^{-\alpha h}e^{h\mathsf{A}} + \frac{2h^2 M_0^2}{\alpha^2 (1-\epsilon)}(1-e^{-\alpha h}) e^{h\mathsf{A}}\mathsf{N}^2 \Big) S \le \Big(e^{-\alpha h}  + \frac{2h^2 M_0^2M_1^2}{\alpha^2 (1-\epsilon)}(1-e^{-\alpha h}) \Big)  e^{\lambda  h} S. \label{pf:weak11}
\end{equation}
We simplify this using the real variable inequality \eqref{ineq:1-eps.x}.
Define
\begin{equation}
\widehat{h} := \min\bigg(\frac{2(1-\epsilon)}{\alpha}, \ \frac{\alpha}{\sqrt{2}M_0M_1}\bigg((1-\epsilon)\Big(1 - \frac{2 \epsilon(1-\epsilon)}{1-e^{-2(1-\epsilon)}}\Big)\bigg)^{1/2}\bigg). \label{def:h-hat-weak}
\end{equation}
For $h \le \widehat{h}$ we have
\[
e^{-\alpha h}  + \frac{2h^2 M_0^2 M_1^2}{\alpha^2 (1-\epsilon)}(1-e^{-\alpha h}) \le 1-\alpha\epsilon h \le e^{-\alpha\epsilon h}.
\]
Hence, we may simplify \eqref{pf:weak11} further:
\begin{equation}
\Big(e^{-\alpha h}e^{h\mathsf{A}} + \frac{2h^2 M_0^2 M_1^2}{\alpha^2 (1-\epsilon)}(1-e^{-\alpha h}) e^{h\mathsf{A}}\mathsf{N}^2 \Big) S \le   e^{(\lambda-\alpha\epsilon )  h} S.\label{pf:weak-iter11}
\end{equation}
Note that the weak interaction assumption (B.3) allows us to choose $\epsilon$ close to 1 to ensure that
\[
\tau := \alpha\epsilon - \lambda  = \alpha\epsilon - \frac{\gamma M_0 R_1}{\alpha\epsilon}  > 0.
\]
Noting that $H_0 \le C_0S$ by the assumption (Init), the bound \eqref{pf:weak-iter11} lets us estimate the first term of \eqref{pf:operator-iteration-weak} by
\begin{equation}
\Big(e^{-\alpha h}e^{h\mathsf{A}} + \frac{2h^2 M_0^2}{\alpha^2 (1-\epsilon)}(1-e^{-\alpha h}) e^{h\mathsf{A}}\mathsf{N}^2 \Big)^k H_0 \le C_0e^{-\tau kh  } S. \label{pf:weak-iter-term1}
\end{equation}
To handle the second term of \eqref{pf:operator-iteration-weak}, we combine $e^{t\mathsf{A}}S \le e^{\lambda t}S$ with \eqref{pf:weak-iter11} to obtain
\begin{equation*}
\sum_{j=0}^{\infty}\Big(e^{-\alpha h}e^{h\mathsf{A}} + \frac{2h^2 M_0^2}{\alpha^2 (1-\epsilon)}(1-e^{-\alpha h}) e^{h\mathsf{A}}\mathsf{N}^2 \Big)^je^{h\mathsf{A}} S \le \frac{e^{\lambda h}}{1 - e^{-\tau h}} S.
\end{equation*}
Recalling the definition of $G$ from \eqref{pf:Gdef-weak} and using $\mathsf{N}S \le M_1S$ lets us finally bound the second term of \eqref{pf:operator-iteration-weak} by
\begin{align*}
\frac{1-e^{-\alpha h}}{\alpha} &\sum_{j=0}^{k-1} \Big(e^{-\alpha h}e^{h\mathsf{A}} + \frac{2h^2 M_0^2}{\alpha^2 (1-\epsilon)}(1-e^{-\alpha h}) e^{h\mathsf{A}}\mathsf{N}^2 \Big)^je^{h\mathsf{A}} G \\
	&\le h \frac{e^{\lambda h}}{\alpha(1-\epsilon)}\frac{1-e^{-\alpha h}}{1 - e^{-\tau h}}  \bigg(\frac{hM_0^2M_1^2}{\alpha} + M_0M_1 \bigg) S.
\end{align*}
Plugging this and \eqref{pf:weak-iter-term1} into \eqref{pf:operator-iteration-weak}, and using also $\alpha^{-1}(1-e^{-\alpha h})/(1-e^{-\tau h}) \le \tau^{-1}+ h$, we  get
\begin{equation}
H_{kh}(u) \le C_0e^{-\tau kh}|u| +  h \frac{e^{\lambda h}}{1-\epsilon} \bigg(\frac{1}{\tau}+h\bigg) \bigg(\frac{ hM_0^2M_1^2}{\alpha } +   M_0M_1  \bigg) |u|. \label{pf:operator-iteration-weak2}
\end{equation}
This is valid for $h \le \widehat{h}$, where $\widehat{h}$ was defined in \eqref{def:h-hat-weak}. We have thus proven Theorem \ref{th:weak-dynamic}, except with $\widehat{h}$ instead of $h^*$, and with $C$ replaced by
\[
\widehat{C}_h =  \frac{e^{\gamma M_0 R_1 h/\alpha\epsilon}M_0M_1}{1-\epsilon}  \bigg(\frac{ M_0 M_1 h }{\alpha } +   1  \bigg)\bigg(\frac{1}{\tau} + h\bigg).
\]

\subsection{Simplifying the constants}
We finally show that $\widehat{h} \ge h^*$ and that $C \le \widehat{C}$. Take $\epsilon= \frac12 + \frac{\gamma M_0 R_1}{2\alpha^2}$, which is the midpoint between $\frac{\gamma M_0 R_1}{\alpha^2}$ and 1. Then the exponent $\tau$ becomes
\begin{align*}
\tau &= \frac{\alpha}{2} + \frac{\gamma M_0 R_1}{2\alpha}  - \frac{\gamma M_0 R_1}{\frac{\alpha}{2} + \frac{\gamma M_0 R_1}{2\alpha} } = \frac{\alpha}{2} \frac{\Big(1 - \frac{\gamma M_0 R_1}{\alpha^2} \Big)^2 }{1 + \frac{\gamma M_0 R_1}{\alpha^2} }.
\end{align*}
Recalling the definition $\eta = 1 - \frac{\gamma M_0 R_1}{\alpha^2}$ from the theorem statement, we get 
\begin{align*}
\tau = \frac{\alpha\eta^2}{2(2-\eta)}, \quad 1-\epsilon  &= \frac{\eta}{2}, \quad 2\epsilon(1-\epsilon) = \frac12\eta(2-\eta).
\end{align*} 
Using $\epsilon \ge 1/2$ in the exponent,
the constants then become
\begin{align*}
\widehat{C}_h &= \frac{2M_0 M_1}{\eta} e^{2\gamma M_0 R_1 h/\alpha}  \Big(\frac{ M_0 M_1 h}{\alpha } +  1  \Big)\bigg(\frac{4-2\eta}{\alpha\eta^2} + h\bigg), \\ 
\widehat{h} &= \min\bigg\{\frac{\eta}{\alpha}, \frac{\alpha\sqrt{\eta}}{2 M_0M_1  } \bigg(1-\frac{\frac12\eta(2-\eta)}{1-e^{-\eta} }\bigg)^{1/2}\bigg\} .
\end{align*}
To bound $\widehat{h}$, we again use the inequality \eqref{ineq:etabound-h} to deduce
\[
\widehat{h} \ge \min\bigg\{\frac{\eta}{\alpha}, \frac{\alpha \eta^{3/2}}{5  M_0 M_1  }  \bigg\} \ge \frac{\alpha \eta^{3/2}}{5  M_0 M_1  }  = h^*,
\]
where the last step used $0 < \eta < 1$ along with the fact that $\alpha$ is bounded by the Lipschitz constant of $\nabla V$, which is itself bounded by $M_0 \le M_1$.
The exponential in  $\widehat{C}_h$ is bounded by $e^{\gamma}$ for $h \le h^*$, because $h^* \le \alpha/5M_0M_1$ and $R_1 \le M_1$.
Using $h^* \le \eta/\alpha$ and also $4-2\eta+\eta^3\le 4$ since $0 \le \eta \le 1$, we get
\begin{align*}
\widehat{C}_h &\le \frac{2M_0 M_1}{\eta} e^{\gamma }  \cdot \frac65 \cdot \frac{4-2\eta+\eta^3}{\alpha\eta^2 }  \le \frac{10M_0 M_1e^{\gamma } }{\alpha\eta^3 }   = C.
\end{align*}

\section{Proof of Theorem \ref{th:onestepcontraction} via coupling} \label{se:onestepcontration}

Here we give the proof of Theorem \ref{th:onestepcontraction}. As a prelimary step, we claim that
\begin{equation}
\langle \pi,|\nabla V|_\infty^2\rangle \le 4\beta\log(2n). \label{pf:subgaussianestimate}
\end{equation} 
This improves on \cite[Proposition 2.3]{DelocalizationOfBias} by a factor of $\beta/\alpha$, and it allows us to require merely $h\le 1/\beta$ instead of  $h\le\alpha/\beta^2$. We thank Sinho Chewi for pointing out this improvement. Indeed, this comes from the fact that $\nabla V=-\nabla\log\pi$ is $\sqrt{\beta}$-subgaussian under $\pi$, in the sense that
\[
\log \int e^{\theta \cdot \nabla V(x)}\,\pi(dx) \le \frac{\beta}{2}|\theta|^2, \qquad \forall \theta \in \R^n.
\]
See \cite[Theorem 2.2]{negrea2022approximations} or \cite[Theorem 1.2]{altschuler2023shifted}.  Then, applying \cite[Lemma B.1]{DelocalizationOfBias} yields \eqref{pf:subgaussianestimate}.

Turning now to the main line of the proof, we consider the usual coupling of the LMC iterates with the continuous-time Langevin dynamics. That is, let $(B_t)_{t \ge 0}$ be an $n$-dimensional Brownian motion, and let $Y_0 \sim \pi$. Let
\begin{align*}
X_{(k+1)h} &= X_{kh} - h\nabla V(X_{kh}) - \sqrt{2h}(B_{(k+1)h}-B_{kh}), \qquad k \ge 0, \\
dY_t &= -\nabla V(Y_t)dt + \sqrt{2}dB_t.
\end{align*}
We introduce an auxiliary process as in \cite{DelocalizationOfBias}:
\begin{align*}
\overline{Y}_{(k+1)h} = Y_{kh}  - h\nabla V(Y_{kh}) - \sqrt{2h}(B_{(k+1)h}-B_{kh}), \qquad k \ge 0.
\end{align*}
By the triangle inequality,
\begin{equation}
\big(\E|X_{(k+1)h}-Y_{(k+1)h}|_\infty^2\big)^{1/2} \le \big(\E|X_{(k+1)h}-\overline{Y}_{(k+1)h}|_\infty^2\big)^{1/2} + \big(\E|\overline{Y}_{(k+1)h}-Y_{(k+1)h}|_\infty^2\big)^{1/2}. \label{pf:contraction1}
\end{equation}
Using $\nabla^2 V \le \beta I$, we may exactly follow the calculation of \cite[(A.6)--(A.7)]{DelocalizationOfBias} to control the second term by
\begin{equation*}
\E|\overline{Y}_{(k+1)h}-Y_{(k+1)h}|_\infty^2 \le \frac{2\beta^2 h^4}{3}\int_{\R^n}|\nabla V|_\infty^2 \,d\pi + 8\beta^2 h^3\log(2n),
\end{equation*}
Apply \eqref{pf:subgaussianestimate}, use $h\le 1/\beta$, and combining terms to get
\begin{equation}
\E|\overline{Y}_{(k+1)h}-Y_{(k+1)h}|_\infty^2 \le 16 \beta^2h^3 \log(2n). \label{pf:contraction2} 
\end{equation}
To deal with the first term of \eqref{pf:contraction1}, interpolate in the usual way and use the assumed structure of $\nabla^2V$:
\begin{equation*}
 X_{(k+1)h}-\overline{Y}_{(k+1)h} = X_{kh} - Y_{kh}  - h\big(\nabla V(X_{kh}) - \nabla V(Y_{kh})\big) = (R_D-hR_M)(X_{kh} - Y_{kh}),
\end{equation*}
where we define the random matrices
\begin{equation*}
R_D := I - h\int_0^1D(uX_{kh} + (1-u)Y_{kh})\,du, \qquad R_M := \int_0^1M(uX_{kh} + (1-u)Y_{kh})\,du.
\end{equation*}
Note that the entries of $D$ lie in the interval $[\alpha,\beta]$, and so the entries of the diagonal matrix $R_D$ lie in $[1-\beta h,1-\alpha h] \subset [0,1-\alpha h]$ since $h \le \alpha/\beta^2 \le  1/\beta$. Since $R_D$ is diagonal, we deduce $\|R_D\|_{\infty\to\infty} \le 1-\alpha h$. The norm bound assumed for $M$ thus yields
\begin{align*}
|R_D-hR_M|_{\infty\to\infty} &\le |1-\alpha h| + h|R_M|_{\infty\to\infty} \le 1-(\alpha-\alpha_0) h.
\end{align*}
It follows that
\[
\big(\E|X_{(k+1)h}-\overline{Y}_{(k+1)h}|_\infty^2\big)^{1/2} \le \big(1-(\alpha-\alpha_0) h\big)\big(\E|X_{kh} - Y_{kh}|_\infty^2\big)^{1/2}.
\]
Now, assuming $(X_{kh},Y_{kh})$ was coupled optimally for $W_{2,\ell^\infty}(\rho_{kh},\pi)$, we combine this estimate with \eqref{pf:contraction2} to deduce
\begin{align*}
W_{2,\ell^\infty}(\rho_{(k+1)h},\pi) &\le e^{-(\alpha-\alpha_0)h}W_{2,\ell^\infty}(\rho_{kh},\pi) +  4\beta h^{3/2}\sqrt{\log(2n)}.
\end{align*}
Iterate this to get the claimed bound on the bias. \hfill \qedsymbol

\section{Proof of Theorem \ref{th:continuoustime}} \label{se:continuoustime}

We start with a straightforward adaptation of Propositions \ref{pr:marginaldynamics}, noting that $(\rho^u_t)_{t \ge 0}$ solves the Fokker-Planck equation $\partial_t \rho^u =\nabla_u\cdot(b^u\rho^u)+\Delta_u\rho^u$ for $b^u_t(x_u) := \E[\nabla_uV(Y_t)\,|\,Y^u_t=x_u]$. Then, arguing as in the proof of \ref{pr:H_t(u)firstbound} up to \eqref{ineq:completesquare}, and applying Remark \ref{re:completesquare} to introduce the free parameter $\epsilon \in (0,1)$, we have
\begin{align*}
\frac{d}{dt}H_t(u) &\le \frac{1}{4\epsilon}\E\big|\nabla\log\pi^u(Y^u_t) + \E[\nabla_u V(Y_t)\,|\,Y^u_t]\big|^2 - (1-\epsilon)I(\rho^u_t\,|\,\pi^u) \\
	&= \frac{1}{4\epsilon}\E\big|\langle \rho^{N(u)\setminus u}_{t,Y^u_t} - \pi^{N(u)\setminus u}_{Y^u_t},\,\nabla_uV(Y^u_t,\cdot)\rangle\big|^2 - (1-\epsilon)I(\rho^u_t\,|\,\pi^u).
\end{align*}
Here we write $\rho^{N(u)\setminus u}_{t,Y^u_t}$ as usual for the conditional law of $Y^{N(u)}_t$ given $Y^u_t$. Apply the transport inequality (A.2) along with the chain rule for relative entropy, exactly as in Section \ref{se:hierarchyterm-sparse}, to get
\[
\frac{d}{dt}H_t(u) \le\frac{\gamma\beta^2}{2\alpha\epsilon}\big(H_t(N(u))-H_t(u)\big) - 2\alpha (1-\epsilon)H_t(u).
\]
Define the operator $\mathsf{A} : \R^{2^{[n]}} \to \R^{2^{[n]}}$ by $\mathsf{A}F(u)=\frac{\gamma\beta^2}{2\alpha}(F(N(u))-F(u))$. Then the above rewrites as a coordinatewise inequality between vectors,
\begin{equation*}
\frac{d}{dt}H_t \le \bigg(\frac{1}{ \epsilon}\mathsf{A}-2\alpha(1-\epsilon)\mathsf{I}\bigg)H_t.
\end{equation*}
Integrate this coordinatewise inequality as in  \ref{pf:Gronwall} to find
\begin{equation*}
H_t \le e^{-2\alpha(1-\epsilon)t} e^{ t\mathsf{A}/\epsilon}H_0.
\end{equation*}
As explained in the paragraph after \eqref{def:Aoperator-sparse},
the semigroup admits the stochastic representation $e^{s\mathsf{A}}F(u)=\E F(N_{\Lambda(s)}(u))$ for any $F \in \R^{2^{[n]}}$, where $\Lambda$ is a Poisson process with rate $\gamma\beta^2/2\alpha$. 
This completes the proof. \hfill\qedsymbol

\bibliographystyle{amsplain}
\bibliography{biblio}

\end{document}